\definecolor{linkcol}{RGB}{134,22,87} 
\definecolor{citecol}{RGB}{22,91,137} 
\definecolor{urlcol}{RGB}{20,88,21}
\newtheorem{theorem}{Theorem}
\newtheorem{definition}{Definition}
\newtheorem{remark}{Remark}
\newcommand{\Frst}[1]{\textcolor{red2}{\textbf{#1}}}
\newcommand{\Scnd}[1]{\textcolor{blue2}{\textbf{#1}}}
\definecolor{pink}{rgb}{0.858, 0.188, 0.478}
\definecolor{green}{rgb}{0.2, 0.5, 0.15}
\definecolor{orange}{rgb}{1, 0.25, 0.}
\definecolor{purple}{rgb}{0.6, 0.4, 0.7}
\definecolor{lightgray}{rgb}{0.9, 0.9, 0.9}
\definecolor{blue2}{rgb}{0.047, 0.365, 0.647}
\definecolor{red2}{rgb}{0.863, 0.075, 0.235}
\definecolor{green2}{rgb}{00.216, 0.722, 0.616.}
\definecolor{gray}{rgb}{0.620, 0.620, 0.620}
\definecolor{c1}{rgb}{1.000, 0.753, 0.796}
\definecolor{c2}{rgb}{0.980, 0.502, 0.447}
\definecolor{c3}{rgb}{1.000, 0.498, 0.314}
\definecolor{c4}{rgb}{0.863, 0.078, 0.235}
\definecolor{c5}{rgb}{0.863, 0.078, 0.235}
\definecolor{c6}{rgb}{0.545, 0.000, 0.000}
\definecolor{commentcolor}{RGB}{110,154,155}   
\newcommand{\PyComment}[1]{\ttfamily\textcolor{commentcolor}{\# #1}}  
\newcommand{\ours}[0]{\textsc{GraphSSM}\xspace}
\newcommand{\hippo}{\textsc{HiPPO}\xspace}
\newcommand{\ghippo}{\textsc{GHiPPO}\xspace}
\newcommand{\gproj}{\textsc{Gproj}\xspace}
\newcommand{\tokenmix}{\textsc{Mix}\xspace}
\newcommand{\gnn}{\textsc{GNN}\xspace}
\newcommand{\einsum}{\textsf{einsum}\xspace}
\definecolor{mediumturquoise}{rgb}{0.28, 0.82, 0.8}
\definecolor{antiquewhite}{rgb}{0.98, 0.92, 0.84}
\title{State Space Models on Temporal Graphs:\\A First-Principles Study}
\author{
Jintang Li$^{1}$\thanks{Equal contribution.} , Ruofan Wu$^{2*}$, Xinzhou Jin$^1$, Boqun Ma$^3$, Liang Chen$^1$\thanks{Corresponding author.}, Zibin Zheng$^1$ \\
$^1$Sun Yat-sen University,
$^2$Coupang, $^3$Shanghai Jiao Tong University\\
    \normalsize\rule{0pt}{1em}
    \faEnvelope[regular]{} \texttt{\{lijt55,jinxzh5\}@mail2.sysu.edu.cn,\{wuruofan1989,boqun.mbq\}@gmail}\\
    \texttt{\{chenliang6,zhzibin\}@mail.sysu.edu.cn\}}
}
\begin{document}
\maketitle

\begin{abstract}
Over the past few years, research on deep graph learning has shifted from static graphs to temporal graphs in response to real-world complex systems that exhibit dynamic behaviors. In practice, temporal graphs are formalized as an ordered sequence of static graph snapshots observed at discrete time points. Sequence models such as RNNs or Transformers have long been the predominant backbone networks for modeling such temporal graphs. Yet, despite the promising results, RNNs struggle with long-range dependencies, while transformers are burdened by quadratic computational complexity. Recently, state space models (SSMs), which are framed as discretized representations of an underlying continuous-time linear dynamical system, have garnered substantial attention and achieved breakthrough advancements in \emph{independent} sequence modeling. In this work, we undertake a principled investigation that extends SSM theory to temporal graphs by integrating structural information into the online approximation objective via the adoption of a Laplacian regularization term. The emergent continuous-time system introduces novel algorithmic challenges, thereby necessitating our development of \ours, a graph state space model for modeling the dynamics of temporal graphs. Extensive experimental results demonstrate the effectiveness of our \ours framework across various temporal graph benchmarks. 
\end{abstract}

\doparttoc 
\faketableofcontents 

\section{Introduction}
As a class of neural networks designed to operate directly on graph-structured data, graph neural networks (GNNs)~\cite{gcn,gat,maskgae} have achieved remarkable success and have established new state-of-the-art performance across a broad spectrum of graph-based learning tasks~\cite{ogb}.
While significant progress has been made in researching \textit{static} graphs, many real-world networks, such as social, traffic, and financial networks may exhibit \textit{temporal} behaviors that carry valuable time information~\cite{survey,step}.
This gives rise to temporal (dynamic) graphs, wherein the nodes and edges of the graph may undergo constant or periodic changes over time.
In applications where temporal graphs arise, modeling and exploiting the dynamic nature of the continuously evolving graph is crucial in representing the underlying data and achieving high predictive performance~\cite{jodie,caw,sad}.

Learning over temporal graphs is typically approached as a sequence modeling problem in which graph snapshots form a sequence~\cite{EvolveGCN}. This often involves challenges related to long graph sequences and scalability issues~\cite{spikenet}.
Recurrent neural networks (RNNs)~\cite{rnn,gru,lstm} have historically dominated sequence modeling over the last years. However, they have long been plagued by poor capability in modeling long sequences due to rapid forgetting. This hampers their performance in temporal graphs that require a broader context or longer time window to capture relevant dependencies and patterns.
Recently, the advancement of Transformers~\cite{attention} has led to a shift in this paradigm, given their superior performance. 
Yet, Transformers also struggle with long sequence learning because the computational and memory complexity of self-attention is quadratically dependent on the sequence length.
The overwhelming computation and memory requirements/costs associated with Transformers makes them less applicable in practical applications handling long-term sequences~\cite{vim}.

Recently, state space models (SSMs) have emerged as a powerful tool for sequence modeling~\cite{hippo, s4, s4nd, s5, h3, mamba}. The salient characteristic that distinguishes state space models as particularly compelling is their conceptualization of sequential inputs as discrete observations from an underlying process evolving in continuous time, which naturally arises in scenarios such as speech processing~\cite{s5} and time series analysis~\cite{zhang2023effectively}. SSMs sustain a latent state throughout an input sequence and formulate state update equations through the discretization of an underlying linear dynamical system (LDS). Owing to their invariant state size, SSMs exhibit an efficient inferential time complexity, akin to that of RNNs. Simultaneously, they overcome the long-range modeling deficiencies inherent to RNNs through meticulous initializations of state matrices which are theoretically shown to achieve an optimal compression of history~\cite{hippo}.

Temporal graphs often manifest as discrete snapshots capturing the evolution of an underlying graph that is inherently dynamic and continuous in nature~\cite{survey}. In this context, the SSM methodology could be appropriated as a foundational primitive for temporal graph modeling. However, SSMs are predominantly architected towards independent sequence modeling. Hence, the task of systematically incorporating time-varying structural information into the SSM framework poses significant challenges. Specifically, it remains unexplored as to whether the foundational methodology of discretized LDS is readily applicable to the domain of temporal graphs.

In this work, we advance the SSM methodology to encompass temporal graphs from the first principles. Rather than presupposing the evolution of the underlying temporal graph, we dive into the fundamental problem of online function approximation that underpins the theoretical development of SSMs for sequence modeling~\cite{hippo}. By solving a novel Laplacian regularized online approximation objective, we derive a piecewise dynamical system that compresses historical information of temporal graphs. The piecewise nature of the obtained continuous-time system poses new challenges toward discretization into linear recurrences, thereby motivating our design of \ours, a state space framework for temporal graphs. The main contributions of this work are summarized as follows:

\begin{itemize}[leftmargin=*]
    \item We introduce the \ghippo abstraction, a novel construct predicated on the objective of Laplacian regularized online function approximation. This abstraction can alternatively be conceptualized as a memory compression primitive that simultaneously compresses both the feature dynamics and the evolving topological structure of the underlying temporal graph. The solution to \ghippo is characterized by a dynamical system that is piecewise linear in node feature inputs.
    \item We introduce \ours, a flexible state space framework designed for temporal graphs, which effectively addresses the key algorithmic challenge of unobserved graph mutations that impedes the straightforward discretization of the \ghippo solution into (linear) recurrences through employing a novel mixed discretization strategy.
    \item Experimental results on six temporal graphs have validated the effectiveness of \ours. In particular, \ours has the advantages in scaling efficiency compared to existing state-of-the-arts, which can generalize to temporal graphs with long-range snapshots.
\end{itemize}

\section{Related work}
\label{sec:related_work}


\subsection{Temporal graph learning}

A major branch of temporal graph learning methods consists of snapshot-based methods, which handle discrete-time temporal graphs by learning the temporal dependencies across a sequence of time-stamped graphs. 
Early works mainly focus on learning node representations by simulating temporal random walks~\cite{tNodeEmbed} or modeling the triadic closure process~\cite{DynamicTriad} on multiple graph snapshots.
These methods typically generate piecewise constant representations and may suffer from the staleness problem~\cite{survey}.
In recent years, the most established solution has been switched to combine sequence models (e.g., RNNs~\cite{rnn} and SNNs~\cite{IF,LIF}) with static GNNs to capture temporal dependencies and correlations between snapshots~\cite{star,EvolveGCN,tNodeEmbed,spikenet}. 
To better translate the success achieved on static graphs in both their design and training strategies, recent frameworks such as ROLAND~\cite{roland} and its variants~\cite{WinGNN,CS-TGN} have been proposed to repurpose static GNNs to temporal graphs.
There is another important line of research that focuses on continuous-time temporal graphs, we kindly refer readers to~\cite{DBLP:journals/corr/abs-2302-01018} and~\cite{survey} for comprehensive surveys on this research topic.

\subsection{State space models}
State space models (SSMs) have historically served as a pivotal tool in fields such as signal processing~\cite{oppenheim1997signals} and time series analysis~\cite{brockwell1991time}. In recent advancements, they have also seen active adoption as a layer within neural sequence modeling frameworks~\cite{hippo, s4, s4nd, s5, mamba}. The linear nature of SSMs confers several significant advantages. Key among these is the better-controlled stability that enables effective long-range modeling through careful initializations of state space layer parameters~\cite{hippo, orvieto2023resurrecting}, with the most representative method being \hippo~\cite{hippo}, a theory-driven framework notable for its optimal memory compression on continuous sequence inputs. Moreover, the computational efficacy of SSMs is notably enhanced through the use of techniques such as convolutions~\cite{s4, h3} or parallel scans~\cite{s5}. The promising properties of SSMs also attracts further explorations on graphs~\cite{graphmamba}.

\paragraph{Comparison.}
The usual paradigms for designing sequence models over graphs involve recurrence (e.g. RNNs~\cite{rnn}), integrate-and-fire (e.g. SNNs~\cite{IF,LIF}), or attention (e.g. Transformers~\cite{attention}), which each come with tradeoffs~\cite{lssl}. For example, RNNs are a natural recurrence model for sequential modeling that require only constant computation/storage per time step, but are slow to train and suffer from the rapid forgetting issue. This empirically limits their ability to handle long sequences. SNNs
share a similar recurrent architecture with RNNs while using 1-bit spikes to transmit temporal information, which would sacrifice expressivity and potentially suffer from optimization difficulties (e.g., the ``vanishing gradient problem'')~\cite{spikegcl}.
Transformers encode local context via attention mechanism and enjoy fast, parallelizable training, but are not sequential, resulting in more expensive inference and an inherent limitation on the context length. 
Compared to the aforementioned architectures, SSMs particularly the promising Mamba (S$6$) model~\cite{mamba}, offer advantages such as fast training and inference, along with fewer parameters and comparable performance. These characteristics make SSMs particularly well-suited for sequence modeling, even (or especially) on extremely long sequences.
Comparisons among these architectures are illustrated in table~\ref{tab:comparison}

\begin{table}[h]
\centering
\caption{Comparisons of different neural network architectures for sequence modeling.}
\label{tab:comparison}
\resizebox{\linewidth}{!}{
\begin{tabular}{l|cccc}
\toprule
\textbf{} & \textbf{RNNs~\cite{rnn,gru,lstm}}  & \textbf{SNNs~\cite{IF,LIF}} & \textbf{Transformers~\cite{attention}} & \textbf{SSMs (S$6$~\cite{mamba})}\\
\midrule
\textbf{Training} & Slow & Slow & Fast & Fast \\
\textbf{Inference} & Fast & Fast & Slow & Fast \\
\textbf{Para. Size} & Medium & Extremely small & Large & Small \\
\textbf{Performance} & \ding{80}\ding{80}\ding{80} & \ding{80}\ding{80}\ding{80}&\ding{80}\ding{80}\ding{80}\ding{80}\ding{80}  & \ding{80}\ding{80}\ding{80}\ding{80} \\
\textbf{Limitations} & Forgetting & Vanishing gradients & Mem. \& Time: O(n$^2$) &  ? \\
\bottomrule
\end{tabular}
}
\end{table}

\vspace{-6mm}
\section{The \ours framework}
\begin{wrapfigure}{r}{0.4\linewidth}
    \centering
    \vspace{-14mm}
    \includegraphics[width=\linewidth]{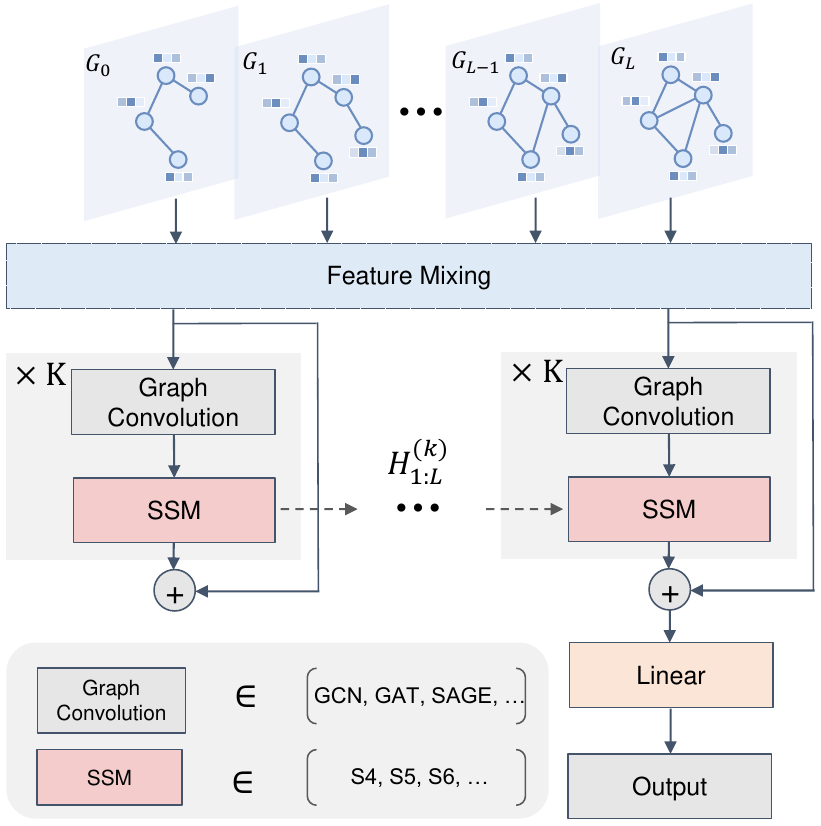}
    \vspace{-5mm}
    \caption{\ours framework.}
    \vspace{-5mm}    
    \label{fig:framework}
\end{wrapfigure}
The primary motivation of our framework is the fact that discrete-time temporal graphs are sequential observations of an underlying temporal graph that evolves continuously. Adopting this functional viewpoint, we will first develop a piecewise recurrent memory update scheme in section \ref{sec: hippo_on_dyg} that optimally approximates the underlying continuous-time temporal graph, utilizing a novel extension of the \hippo abstraction to graph-typed inputs \cite{hippo}. The proposed framework retains many nice properties of \hippo while posing the new challenge of \emph{unobserved graph mutation} when handling discretely-observed observations, which we analyze in section \ref{sec: discretization} and propose a mixing mechanism to improve the recurrent approximation. Finally, we present \ours framework in section~\ref{sec: model}. An overview of \ours is shown in figure~\ref{fig:framework}.

\subsection{\ghippo: \hippo on temporal graphs}\label{sec: hippo_on_dyg}
\textbf{Setup.} We fix a time interval $[0, T]$. A temporal graph on $[0, T]$ is characterized by two \emph{processes} $G$ and $X$: For each $t \in [0, T]$, the process $G$ maps $t$ to a graph object $G(t) = (V(t), E(t))$. We assume the node process $V(t)$ to be fixed over time, i.e., $V(t) \equiv V, t \in [0, T]$ with $N_V = \left|V\right|$ and discuss the case for varying node processes in appendix \ref{sec: ext_varying}. The edge process $E(t)$ is a piecewise-constant process with a finite number $M$ of mutations over $[0, T]$ that are described via a sequence of \emph{events}:
\begin{align}
    \mathscr{E} = \{\mathcal{E}_1, \ldots, \mathcal{E}_M\}~\text{with each}~ \mathcal{E}_m = (u_m, v_m, t_m, a_m), 1\le m \le M.
\end{align}
Each event $\mathcal{E}_m$ constitutes an interaction between node pair $(u_m, v_m)$ at time $t_m$ with action $a_m$, the action could be either insertion or deletion. The evolution process is thus depicted as the following:
\begin{align}\label{eqn: graph_evolve}
    G(0) \overset{\mathcal{E}_1}{\longrightarrow} G(t_1) \overset{\mathcal{E}_2}{\longrightarrow} G(t_2) \longrightarrow \cdots \longrightarrow  G(t_{M-1}) \overset{\mathcal{E}_M}{\longrightarrow}  G(t_M) = G(T).
\end{align}
The process $X$ maps $t$ to a node feature matrix $X(t) \in \mathbb{R}^{N_V \times d}$ with feature dimension $d$. Throughout this paper, it is often helpful to view $G$ and $X$ as graph-valued and matrix-valued \emph{functions}. 
In typical discrete-time temporal graph learning problems, the underlying graph is observed at timestamps $\tau_1, \ldots, \tau_L$ with time gaps $\Delta_l = \tau_l - \tau_{l-1}, 2 \le l \le L$. The observations thus form a sequence of snapshots $\{G(\tau_l), X(\tau_l)\}_{1 \le l \le L}$ which are abbreviated as $\{G_{1:L}, X_{1:L}\}$. Notably, the observation times are usually \emph{interleaved with} the mutation times, resulting in the majority of mutation times remain unobserved. This situation presents significant challenges in effectively modeling the dynamics of graph evolution, a topic that will be further explored subsequently. \par
\textbf{The \hippo abstraction.} Algorithmically, the goal of continuous-time dynamic modeling is to design a \emph{memory module} that optimally compresses all the historical information \cite{hippo}. Under the context of univariate sequence modeling, the \hippo framework \cite{hippo} formalizes the memory compression problem into an online approximation problem in some function space and derives \hippo operators under specific types of basis functions, among which the \hippo-\textsc{LegS} configuration has become the state-of-the-art in state-space sequence modeling paradigms \cite{s4, s5}. However, naively extending \hippo abstraction to graph learning scenarios (via treating node features as inputs) could be deemed inadequate since \hippo handles distinct inputs \emph{independently}, without the capability to incorporate the interconnectivity information among various inputs which could potentially enhance the efficiency of memory compression. For illustrative purposes, in instances where input observations are noisy, the exploitation of neighborhood information has the potential to facilitate a denoising step, as evidenced in image processing applications \cite{pang2017graph} and semi-supervised learning primitives \cite{zhu2003semi, wei2020theoretical}.
To systematically utilize the connectivity information, we propose a new approximation paradigm, the \emph{Laplacian-regularized online approximation} that extends \hippo to graph modeling frameworks. Formally, we start with the simple setup with $d=1$, i.e., each node possesses a scalar feature, and we propose an approximation scheme that simultaneously approximates the history of all the $N_V$ inputs up until time $t$, i.e., $\{X(s), s \in [0, t]\}$ using their corresponding memories at time $t$, i.e., $Z(t) = \{z_v(t)\}_{v \in V}\in \mathbb{R}^{N_V \times 1}$ according to the following objective at time $t$:
\begin{align}\label{eqn: graph_obj}
    \mathcal{L}_t(Z; G, X, \mu) = \int_0^t \left\|X(s) - Z(s)\right\|_2^2 d\mu_t(s) + \alpha \int_0^t Z(s)^\top L(s) Z(s) d\mu_ts.
\end{align}
Here $\alpha > 0$ is a balancing constant, $\mu_t$ is a time-dependent measure that is supported on the interval $[0, t]$ which controls the importance of various parts of the input domain\footnote{Technically, we require $Z$ and $X$ to reside within some appropriately defined Hilbert space. A comprehensive treatment will be provided in appendix \ref{sec: proofs}.}
and $L(t)$ is a normalized Laplacian at time $t$, which allows definition such as the symmetric normalized Laplacian $L_\text{sym}(s) = I - D(s)^{-1/2}A(s)D(s)^{-1/2}$ where $D(s)$ is a diagonal matrix whose diagonals are node degrees, or random walk normalized Laplacian $L_\text{rw}(s) = I - D(s)^{-1}A(s)$. The objective \eqref{eqn: graph_obj} is understood as the ordinary \hippo approximation objective augmented with a regularization component that encourages the \emph{smoothness} of memory compression with respect to adjacent nodes. 
\footnote{To be more precise, at any time $s \in [0, t]$, the integrand $Z(s)^\top L(s) Z(s)$ inside the second term of \eqref{eqn: graph_obj} attains its minimum when $Z(s)$ satisfies certain smoothness criterion which is determined via the choice of graph Laplacian. A more detailed explanation is deferred to appendix \ref{sec: lap}.}
The imposition of smoothness constraints commonly emerges as a beneficial relational inductive bias in the context of graph learning \cite{battaglia2018relational}. By leveraging the data from adjacent nodes, one can potentially achieve a more effective denoising effect during the process of node memory compression.
To specify a suitable approximation subspace for memories $Z$, we adopt the approach of \hippo that uses some $N$-dimensional subspace of polynomials which we denote as $\mathcal{P}_N$. Now we define a \emph{graph memory projection operator} $\gproj_t$ that maps the temporal graph up until time $t$ to a collection of $N_V$ polynomials with each one lies in $\mathcal{P}_N$, i.e.,
\begin{align}\label{eqn: gproj}
    \gproj_t \left(G, X\right) = \underset{Z: z_v \in \mathcal{P}_N~\forall v \in V}{\arg\min}\mathcal{L}_t(Z; G, X, \mu).
\end{align}
We further define a \emph{coefficient} operator $\textsc{Coef}_t$ that maps each polynomial in the collection in \eqref{eqn: gproj} to the coefficients of the basis of orthogonal polynomials defined with respect to $\mu_t$, the following definition formalizes our extension of \hippo to continuous-time temporal graphs which we term \ghippo:
\begin{definition}[\ghippo]
    Given a continuous-time temporal graph $(G, X)$, a time-varying measure family $\mu_t$, an $N$-dimensional subspace of polynomials $\mathcal{P}_N$, the \ghippo operator at time $t$ is the composition of $\gproj_t$ and $\textsc{Coef}_t$ that maps the temporal graph and node features to a collection of projection coefficients $U(t) \in \mathbb{R}^{N_V \times N}$, or $\ghippo\left(G, X\right) = \textsc{Coef}_t\left(\gproj_t \left(G, X\right)\right) $.
\end{definition}
The most favorable property of the \hippo framework on independent inputs is that the outputs of \hippo operators are characterized via a concise ordinary differential equation (ODE) that takes the form of a linear time-invariant state space model (LTI-SSM). The following theorem states that most of the desirable properties of \hippo are retained by \ghippo except for the LTI property:
\begin{theorem}\label{thm: ghippo}
    Let $G$ evolve according to \eqref{eqn: graph_evolve}. Taking $\mu_t$ to be the scaled Legendre measure (LegS) with $\mu_t = \frac{1}{t}\mathbb{I}_{[0, t]}$ where $\mathbb{I}_{[0, t]}$ stands for the indicator function of the interval $[0, t]$, the evolution of the outputs of \ghippo operator is characterized by $M$ ODEs according to mutation times as follows:
    \begin{align}\label{eqn: ghippo}
        \dfrac{d U(t)}{dt} = U(t) A^\top + (I + \alpha L(t))^{-1}X(t)B^\top,~\quad 1 \le m \le M, t \in [t_{m-1}, t_m)
    \end{align}
    where $A \in \mathbb{R}^{N \times N}$ and $B \in \mathbb{R}^{N \times 1}$ takes the same form as in the \hippo formulation \cite{hippo}:
    \begin{align}\label{eqn: hippo_matrices}
        A_{nk} = -
        \begin{cases}
            \sqrt{(2n + 1)(2k + 1)}&\text{if $n > k$}, \\
            n + 1                  &\text{if $n = k$}, \\
            0                      &\text{if $n < k$}, 
        \end{cases},
        \qquad\text{and}\quad B_n = \sqrt{2n + 1}, 1 \le n \le N.
    \end{align}
\end{theorem}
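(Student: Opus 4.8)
The plan is to reduce the coupled, Laplacian-regularized objective \eqref{eqn: graph_obj} to an ordinary \hippo approximation of a suitably \emph{denoised} signal, and then read off the coefficient dynamics from the standard \hippo-\textsc{LegS} computation. First I would exploit the fact that, at each fixed $t$, the integrand of \eqref{eqn: graph_obj} is quadratic in $Z(s)$ for every $s$. Writing $M(s) = I + \alpha L(s)$, which is positive definite because $L(s) \succeq 0$ and $\alpha > 0$, completing the square pointwise in $s$ gives
\begin{equation*}
\mathcal{L}_t(Z) = \int_0^t \bigl(Z(s) - \tilde{X}(s)\bigr)^\top M(s) \bigl(Z(s) - \tilde{X}(s)\bigr)\, d\mu_t(s) + \text{const}, \qquad \tilde{X}(s) = (I + \alpha L(s))^{-1} X(s),
\end{equation*}
where the constant is independent of $Z$. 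This identifies $\tilde{X}$ as the effective target of the approximation and already explains the appearance of $(I + \alpha L(t))^{-1} X(t)$ as the driving term in \eqref{eqn: ghippo}: the sole effect of the regularizer is to replace the raw input $X$ by its graph-denoised version $\tilde{X}$, consistent with the denoising interpretation advertised around \eqref{eqn: graph_obj}.

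Next I would treat the clean base case in which $L(s)$ is constant on all of $[0,t]$. Since $\mu_t$ and the polynomial constraint $z_v \in \mathcal{P}_N$ act only on the time variable while the weight $M(s) \equiv M$ acts only on the node index, I would diagonalize $M = Q\Lambda Q^\top$ and pass to node-coordinates $Q^\top Z$. The time-projection onto $\mathcal{P}_N$ in $L^2(\mu_t)$ is linear and commutes with the constant matrix $Q$, and each positive eigenvalue merely rescales its coordinate's objective without moving the minimizer; hence the weighted problem collapses to the \emph{node-wise} \hippo projection of $\tilde{X}$. Thus $\gproj_t(G, X)$ is, coordinatewise, the \hippo projection of $\tilde{x}_v$, and $U(t) = \textsc{Coef}_t(\gproj_t(G,X))$ simply stacks the \hippo-\textsc{LegS} coefficients $u_{v,n}(t) = \langle \tilde{x}_v, g_n^{(t)}\rangle_{\mu_t}$. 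With $\mu_t = \tfrac{1}{t}\mathbb{I}_{[0,t]}$ the orthonormal basis $g_n^{(t)}$ are rescaled Legendre polynomials, so I would differentiate $u_{v,n}(t)$ in $t$ by the Leibniz rule, separating the contributions of (i) the moving endpoint $s = t$, (ii) the $t$-dependence of the rescaled basis, and (iii) the $\tfrac{1}{t}$ density. The Legendre three-term recurrence and derivative identities collapse these into $\dot U = U A^\top + \tilde{X}(t) B^\top$ with $A, B$ exactly as in \eqref{eqn: hippo_matrices}, reproducing the \hippo-\textsc{LegS} recurrence; the explicit dependence on $t$ and on $L(t)$ makes the loss of time-invariance immediate.

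The delicate point, and what I expect to be the main obstacle, is the cross-interval history. For $t$ lying in a later interval $[t_{m-1}, t_m)$, the global minimizer of \eqref{eqn: graph_obj} formally sees all past weights $M_1, \dots, M_m$, since the first-order optimality condition contains a term $\alpha \int_0^t L(s)\, U(t)\, g^{(t)}(s) g^{(t)}(s)^\top\, d\mu_t(s)$ that does \emph{not} factor through the orthonormality relation $\int_0^t g^{(t)}(s) g^{(t)}(s)^\top d\mu_t = I_N$ precisely because $L(s)$ varies across mutations. The crux is therefore to establish that this history enters the time-derivative only through the already-accumulated state $U(t)$, while the fresh information injected at the moving boundary $s = t$ carries precisely the current weight $L(t)$, so that the driving term is $\tilde{X}(t) = (I + \alpha L(t))^{-1} X(t)$. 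I would address this by an induction over the intervals $[t_{m-1}, t_m)$ determined by \eqref{eqn: graph_evolve}: differentiating the defining relation of $U(t)$ piece by piece, I would verify that the accumulated contributions from $[0, t_{m-1}]$ assemble into the linear term $U(t) A^\top$ while the boundary term supplies $\tilde{X}(t) B^\top$, yielding one ODE per interval and hence the $M$ ODEs of \eqref{eqn: ghippo}.

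Finally, for the general feature dimension $d > 1$ (the theorem having been set up with $d = 1$), I would note that both the fidelity term and the Laplacian regularizer in \eqref{eqn: graph_obj} act identically and independently across the $d$ feature channels, so the entire argument applies columnwise and the same matrices $A$ and $B$ govern each channel, giving \eqref{eqn: ghippo} verbatim with $X(t), U(t)$ interpreted as the full matrices.
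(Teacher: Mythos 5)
Your first two steps are correct and, for the ground they cover, actually tighter than the paper's own argument: completing the square to expose the denoised target $\tilde{X}(s)=(I+\alpha L(s))^{-1}X(s)$, and the diagonalization argument showing that for \emph{constant} $L$ the weighted projection collapses to the node-wise \hippo projection of $\tilde X$, recover exactly what the paper obtains on an interval of constancy. (The paper's route is different in form --- a Pythagorean split of the fidelity term followed by minimizing the regularized integrand pointwise in $s$ --- but for constant $L$ the two coincide, giving $U(t)=(I+\alpha L)^{-1}Q(t)$, where $Q(t)$ collects the raw \hippo coefficients $\langle x_v,P_n\rangle_{\mu_t}$.)

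The gap is the cross-interval step, which you rightly identify as the crux but only promise to settle by induction --- and that verification cannot succeed, because the property you want is false for the exact constrained minimizer. Take $N=1$, so $\mathcal{P}_N$ is the constants and the orthonormal basis is $g_0^{(t)}\equiv 1$. The first-order condition of \eqref{eqn: graph_obj} gives
\begin{align*}
U(t)=\bigl(I+\alpha \bar L(t)\bigr)^{-1}\bar X(t),\qquad \bar L(t)=\frac{1}{t}\int_0^t L(s)\,ds,\qquad \bar X(t)=\frac{1}{t}\int_0^t X(s)\,ds,
\end{align*}
whose time derivative involves $(I+\alpha\bar L(t))^{-1}$ and a term proportional to $(L(t)-\bar L(t))U(t)$; once a mutation lies inside $[0,t]$ we have $\bar L(t)\neq L(t)$, and $\dot U$ is \emph{not} of the form $UA^\top+(I+\alpha L(t))^{-1}X(t)B^\top$. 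For general $N$ the same phenomenon appears in the normal equations $(I+\alpha\mathbb{L}(t))\mathbf{c}=\mathbf{q}(t)$, where $\mathbb{L}(t)$ has blocks $\int_0^t L(s)\,g_n^{(t)}(s)g_m^{(t)}(s)\,d\mu_t(s)$: the history enters through time-averaged Laplacians coupling all coefficients, not only through the accumulated state $U(t)$. The paper does not resolve this coupling; it sidesteps it. Its claimed solution $\gproj_t(G,X)(s)=(I+\alpha L(s))^{-1}\Pi(X)(s)$ (with $\Pi$ the $L^2(\mu_t)$-projection onto $\mathcal{P}_N$) is the pointwise-in-$s$ minimizer, which is piecewise polynomial in $s$ and hence in general not feasible for the constraint $z_v\in\mathcal{P}_N$ once $L$ jumps inside $[0,t]$; coefficients are then read off with the instantaneous Laplacian, $U(t)=(I+\alpha L(t))^{-1}Q(t)$, for which \eqref{eqn: ghippo} follows trivially from the chain rule on each interval of constancy. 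So the statement the paper actually proves concerns this pointwise surrogate rather than the exact minimizer of \eqref{eqn: gproj}; your proposal, by committing to the exact minimizer, commits to verifying something that fails after the first unobserved mutation, and closing your argument would force you to adopt the same (tacit) relaxation the paper makes.
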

According to theorem \ref{thm: ghippo}, the solution \eqref{eqn: ghippo} is LTI over each interval $[t_m, t_{m+1})$ during which the graph structure remains fixed. This property further extends to a piecewise LTI perspective over the interval $[0, T]$. Moreover, we may view the solution \eqref{eqn: ghippo} as a two-stage procedure that could be intuitively described as \emph{diffuse-then-update}. Specifically, this procedure entails a sequential execution, wherein an initial diffusion operation is applied to the features of the input nodes, succeeded by an update to the memory of these nodes. 
\subsection{Unobserved graph mutations and mixed discretization}\label{sec: discretization}
\begin{figure}
    \centering
    \includegraphics[width=0.75\linewidth]{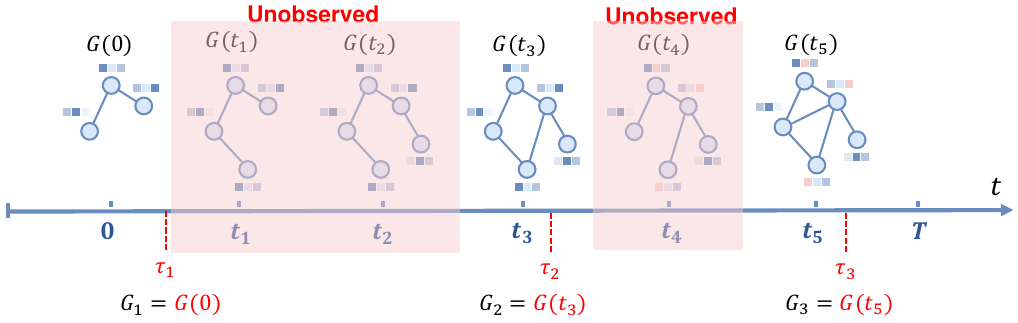}
    \vspace{-2mm}
    \caption{Illustrative example of the \emph{unobserved graph mutation} issue. In this example, the underlying graph is observed at time points $\tau_1, \tau_2, \tau_3$ with two unobserved mutations between $[\tau_1, \tau_2)$ and one between $[\tau_2, \tau_3)$. These unobserved mutations result in temporal dynamics that are inconsistent across the observed intervals, thereby complicating direct applications of ODE discretization methods such as the Euler method or the zero-order hold (ZOH) method.}
    \vspace{-2mm}
    \label{fig: hidden_dynamics}
\end{figure}
Theorem \ref{thm: ghippo} establishes an analogue of \hippo theory on temporal graphs. It is straightforward to verify that most of the subsequent refinements of \hippo apply to \ghippo as well. Among these we will utilize the popular technique of \emph{diagonal state spaces} \cite{gupta2022diagonal, gu2022parameterization} that simply sets $A$ as a diagonal matrix with negative diagonal elements\footnote{More concretely, diagonal SSMs are defined by diagonal $A$ matrices whose diagonal elements lie on the complex plane with negative real parts \cite{gu2022parameterization}, yet recent developments have found that complex state matrices are often not necessary \cite{mamba}. For ease of representation, we only explore real state matrices in this paper.}.
To apply the \ghippo framework to discrete-time temporal graphs, a critical step is to develop a discretized version of \eqref{eqn: ghippo}. However, unlike ordinary \hippo where we can use standard discretization techniques of ODEs to discretize LTI equations, the \ghippo ODE contains discontinuities that correspond to mutation times of the underlying temporal graph, which are often not observed given only access to a list of snapshots. This issue of \emph{unobserved dynamics} complicates the development of a viable discretization scheme for \ghippo, as is pictorially illustrated in figure \ref{fig: hidden_dynamics}. To devise a solution to this challenge, we start by analyzing a hypothetical \emph{oracle scenario} in which all mutations are observable.\par
\textbf{An oracle discretization.} We consider a time range $[\tau_{l-1}, \tau_{l})$ between the $l-1$th and the $l$th snapshot, and assume there are altogether $M_l$ mutation events $\{\mathcal{E}_{l, i}\}_{1 \le i \le M_l}$ happened during this period. Let $G_{l, 0} = G_{l - 1}$ be the graph snapshot at $\tau_{l-1}$, the following process describes the structural evolution inside the interval $[\tau_{l-1}, \tau_{l})$:
\begin{align}
    G_{l-1} = G_{l, 0} \overset{\mathcal{E}_{l, 1}}{\longrightarrow} G_{l, 1} \overset{\mathcal{E}_{l, 2}}{\longrightarrow} G_{l, 2} \longrightarrow \cdots \longrightarrow  G_{l, M_l-1} \overset{\mathcal{E}_{l, M_l}}{\longrightarrow}  G_{l, M_l} = G_{l}
\end{align}
Next, we derive a discretization formula under the strategy of zeroth-order-hold (ZOH). We assume that all intermediate mutations are observed, with the node features staying fixed between mutations, i.e., $X(t) \equiv X_{l, i}, t \in [t_{l, i-1}, t_{l, i})$. The following theorem characterizes the resulting state evolution:
\begin{theorem}[Oracle discretization of \eqref{eqn: ghippo}]\label{thm: zoh}
    Assume $A$ is a diagonal matrix with negative diagonals, for any $1 \le l \le L$. Let $L_{l, i}$ be some Laplacian of $G_{l, i}$, we have the following oracle update rule:
    \begin{align}\label{eqn: complete_zoh}
        U_l = U_{l-1} e^{\Delta_l A} + \widetilde{X}_l \left(e^{\Delta_l A} - I\right) A^{-1},~\widetilde{X}_l = \sum_{i=0}^{M_l} (I + \alpha L_{l, i})^{-1}X_{l, i} \Lambda_i B^\top,
    \end{align}
    where $U_l \in \mathbb{R}^{N_V \times N}$ denotes the discretized state at step $l$ with $U_0 = 0$. For each $1 \le l \le L, 0 \le i \le M_l$, $\Lambda_i \in \mathbb{R}^{N \times N}$ are non-negative diagonal matrices with values depending only on the mutation times, which satisfy $\sum_{i=0}^{M_l}\Lambda_i = I$.
\end{theorem}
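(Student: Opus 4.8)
The plan is to exploit the piecewise-LTI structure already established in Theorem~\ref{thm: ghippo}: on each sub-interval $[t_{l,i}, t_{l,i+1})$ between consecutive mutations the Laplacian $L_{l,i}$ is fixed and, by the oracle assumption, the feature matrix is the constant $X_{l,i}$, so the \ghippo ODE \eqref{eqn: ghippo} degenerates to a constant-coefficient linear matrix ODE $\frac{dU}{dt} = U A + C_i$ with $C_i = (I+\alpha L_{l,i})^{-1} X_{l,i} B^\top$, where I have used that diagonal $A$ satisfies $A^\top = A$. First I would solve this LTI equation exactly over the sub-interval by variation of constants: the constant matrix $-C_i A^{-1}$ is a particular solution (here $A^{-1}$ exists since the diagonals of $A$ are nonzero), and matching the initial condition yields the one-step ZOH map $U(t_{l,i+1}) = U(t_{l,i}) e^{\delta_i A} + C_i(e^{\delta_i A}-I)A^{-1}$, with $\delta_i = t_{l,i+1}-t_{l,i}$.

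Next I would compose these $M_l+1$ affine maps across the whole window $[\tau_{l-1},\tau_l)$ by a straightforward induction, using $t_{l,0}=\tau_{l-1}$ and $t_{l,M_l+1}=\tau_l$. Writing $s_i = \sum_{j>i}\delta_j$ for the time remaining after sub-interval $i$ (so $s_{-1}=\Delta_l$ and $s_{M_l}=0$), the accumulated homogeneous factor turns the contribution of $C_i$ into $C_i (e^{\delta_i A}-I)A^{-1} e^{s_i A}$. Because every factor here is a function of the single diagonal matrix $A$, these matrices all commute, and I can collapse the product via $(e^{\delta_i A}-I)e^{s_i A} = e^{s_{i-1}A}-e^{s_i A}$, since $\delta_i + s_i = s_{i-1}$. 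This yields $U_l = U_{l-1} e^{\Delta_l A} + \sum_{i=0}^{M_l} C_i (e^{s_{i-1}A}-e^{s_i A})A^{-1}$.

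It remains to extract the factored form of the statement. Since $A$ is diagonal and invertible, $(e^{\Delta_l A}-I)A^{-1}$ is an invertible diagonal matrix, so I define $\Lambda_i = (e^{s_{i-1}A}-e^{s_i A})(e^{\Delta_l A}-I)^{-1}$, a diagonal matrix depending only on the mutation gaps and on $A$. Commutativity of diagonal matrices lets me pull each $\Lambda_i$ out and factor the common $(e^{\Delta_l A}-I)A^{-1}$, giving $U_l = U_{l-1}e^{\Delta_l A} + \widetilde{X}_l (e^{\Delta_l A}-I)A^{-1}$ with $\widetilde{X}_l = \sum_i C_i\Lambda_i = \sum_i (I+\alpha L_{l,i})^{-1}X_{l,i}B^\top\Lambda_i$, which is the expression in \eqref{eqn: complete_zoh} once the diagonal factor $\Lambda_i$ is read as acting on the $N$-dimensional coefficient index. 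Finally I verify the two asserted properties: the sum $\sum_{i=0}^{M_l}(e^{s_{i-1}A}-e^{s_i A})$ telescopes to $e^{\Delta_l A}-I$, so $\sum_i \Lambda_i = I$; and for each diagonal entry $a<0$ of $A$, the ordering $s_{i-1}>s_i\ge 0$ forces the numerator $e^{s_{i-1}a}-e^{s_i a}$ and the denominator $e^{\Delta_l a}-1$ to be simultaneously negative, so each entry of $\Lambda_i$ is positive.

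I expect the main obstacle to be the bookkeeping of the nested composition and making the telescoping collapse transparent; the whole argument hinges on the diagonality of $A$, which is exactly what renders all the $N\times N$ exponential factors mutually commuting and thus permits both the clean factorization of $(e^{\Delta_l A}-I)A^{-1}$ and the entrywise sign analysis that certifies $\Lambda_i \succeq 0$.
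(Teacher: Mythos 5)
Your proposal is correct and follows essentially the same route as the paper's proof: exact ZOH solution on each inter-mutation sub-interval, composition of the resulting affine maps, use of the commutativity of functions of the diagonal matrix $A$ to collapse the accumulated exponentials into telescoping differences, and definition of $\Lambda_i$ as the entrywise ratio $\left(e^{s_{i-1}A}-e^{s_i A}\right)\left(e^{\Delta_l A}-I\right)^{-1}$, whose non-negativity and summation to $I$ you verify exactly as the paper does. The only differences are cosmetic (you work directly in matrix form and derive the one-step ZOH map by variation of constants, whereas the paper unrolls the recursion row-by-row with Hadamard products), so there is nothing to flag.
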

\textbf{Mixed discretization.} According to \eqref{eqn: complete_zoh}, given all the (unobserved) mutation information, the state update rule is equivalent to applying ZOH to $\widetilde{X}_l$ which is an \emph{element-wise convex combination} of all the diffused node features. In practice, among all the components of $\widetilde{X}$, we only have access to $X_{l-1}, X_l, G_{l-1}, G_{l}$ with the rest left unobserved. 
Therefore, we propose \emph{mixed discretization} as an approach to approximate $\widetilde{X}_l$. Specifically, we introduce the following mechanisms:
\begin{align}
    \widehat{X}_l^\text{(O)} &= \gnn_\theta\left(X_l, G_l\right) \label{eqn: ordinary}\tag{ordinary ZOH}, \\
    \widehat{X}_l^\text{(F)} &= \gnn_\theta\left(\tokenmix_\phi\left(X_{l-1}, X_l\right), G_{l}\right) \label{eqn: feature_mixing}\tag{feature mixing}, \\
    \widehat{X}_l^\text{(R)} &= \tokenmix_\phi\left(\gnn_\theta\left(X_{l-1}, G_{l-1}\right), \gnn_\theta\left(X_l, G_{l}\right)\right), \label{eqn: representation_mixing}\tag{representation mixing}
\end{align}
which are compositions of inter-node mixing (a consequence of diffusion) and intra-node mixing (mixing node features of consecutive snapshots). For the process of inter-node mixing, we opt to approximate the diffusion operation with a learnable shallow graph neural network (typically a $1$-layer GNN) parameterized by $\theta$ to alleviate the computation burden and improve flexibility\footnote{We let the balancing constant $\alpha$ be absorbed into the learnable parameters. Indeed, for GNNs that employ asymmetric aggregation \cite{hamilton2017inductive}, it is plausible to conceptualize the GNN as engaging a form of auto-balancing.}.
A detailed discussion considering the relation between certain GNN formulations and the choice of Laplacian is presented in appendix \ref{sec: lap}. In the context of intra-node mixing, we introduce a \tokenmix module parameterized by $\phi$ to merge either consecutive node features (as illustrated in \eqref{eqn: feature_mixing}) or consecutive node representations produced by the GNN model (as illustrated in \eqref{eqn: representation_mixing}). In this paper, we assess two simple \tokenmix instantiations: Convolution with a kernel size of $2$ (\textsc{Conv1D}) and a gating mechanism that interpolates between the two inputs (\textsc{Interp}). We postpone a comprehensive description of the mixing methods to appendix \ref{sec: mixing_detail}.
The resulting discretized system is presented as the following matrix-valued state space model:
\begin{align}\label{eqn: graphssm}
    \begin{aligned}
        &U_l = U_{l-1} e^{\Delta_l A} + \Delta_l \widehat{X}_l^{(\cdot)} B^\top\\
        &Y_l = U_l C^\top.
    \end{aligned}
    \quad \text{with}\quad \widehat{X}_l^{(\cdot)} \in \left\lbrace\widehat{X}_l^\text{(O)}, \widehat{X}_l^\text{(F)}, \widehat{X}_l^\text{(R)}\right\rbrace, 1 \le l \le L.
\end{align}
When exact timestamps for snapshots are unavailable, we use the adaptive time step strategy as in \cite{hippo, mamba} that models $\Delta$ a $1$-dimensional affine projection of the inputs followed by a non-negative activation like softplus.
Finally, we utilize the approximation $A^{-1}\left(e^{\Delta A} - I\right) \approx \Delta I$ for diagonal $A$s, and equip the system with an output $Y$ with a state projection matrix $C \in \mathbb{R}^{N \times 1}$. 
\subsection{The \ours framework}\label{sec: model}
Having established the SSM equation \eqref{eqn: graphssm}, we are ready to introduce our main framework \ours. In alignment with conventional design paradigms in the SSM literature, we define a depth-$K$ \ours model through the sequential composition of $K$ \ours blocks, with each block characterized as follows:
\begin{align}\label{eqn: graphssm_block}
    H^{(k)}_{1:L} = \sigma\left(\textsc{SSMLayer}\left(H^{(k-1)}_{1:L}, G_{1:L}\right)\right) + \textsc{Linear}\left(H^{(k-1)}_{1:L}\right), 1 \le k \le K,
\end{align}
where we use $H^{(k)}_{1:L}$ to denote the concatenation of the hidden representation at depth $k$ of all the snapshots along the sequence dimension and $H_{1:L}^{(0)}$ are the node features $X_{1:L}$. The \ours blocks, as outlined in \eqref{eqn: graphssm_block}, incorporate an SSM layer that operates on graph snapshot inputs. This is followed by the application of a nonlinear activation $\sigma$ and the integration of a residual connection which we denote as the addition of a linear projection of inputs with $\text{Linear}$ denotes a linear projection layer that ensures dimension compatibility.\par
\textbf{\ours-S$4$.} The architectural formulation of the SSM layer essentially involves the expansion of the one-dimensional recurrence, as specified in \eqref{eqn: graphssm}, to accommodate general dimensions, i.e., $d > 1$. This expansion is achieved in a straightforward manner by utilizing an individual SSM for each dimension. Consequently, the emergent SSM layer adopts a Single-Input, Single-Output (SISO) configuration. Such a design is intuitively understood as the graph learning analogue of S$4$ \cite{s4}, which we term \ours-S$4$.\par
\textbf{\ours-S$5$ and \ours-S$6$.} In addition to the SISO implementation, we further introduce two variants within the \ours framework. The first alternative represents a Multiple-Input, Multiple-Output (MIMO) extension of \eqref{eqn: graphssm}, wherein a single SSM system is applied across all dimensions. This variant serves as a graph-informed analogue to the S$5$ model \cite{s5}. The second variant extends the S4 model by facilitating input-controlled time intervals and state matrices ($\Delta$, $B$, and $C$). This innovation yields a selective state space model, drawing parallels to the latest SSM architectures such as S$6$ \cite{mamba}.\par
A detailed exposition of the \ours-S$4$ (resp. \ours-S$5$, \ours-S$6$) layer is provided in algorithm \ref{algo: graphssm_s4} (resp. algorithm \ref{algo: graphssm_s5}, algorithm \ref{algo: graphssm_s6}) in appendix \ref{sec: algo_detail}. The overall end-to-end architecture is briefly illustrated in figure \ref{fig:framework}, where we use \ref{eqn: feature_mixing} as the mixing mechanism for illustration.
\begin{remark}[Choice of mixing mechanisms]
    In the \ours architecture, each SSM layer incorporates a mixing mechanism. Based on our empirical investigations, we have observed that employing more sophisticated mixing strategies such as \eqref{eqn: feature_mixing} and \eqref{eqn: representation_mixing}, yields benefits predominantly when these are applied exclusively to the lowermost layer. Specifically, this entails utilizing either $\widehat{X}_l^\text{(F)}$ or $\widehat{X}_l^\text{(R)}$ configurations in the initial layer, while defaulting to $\widehat{X}_l^\text{(O)}$ for the layers that follow. An intuitive rationale behind this strategic layer-specific choice will be elucidated in appendix \ref{sec: choice}.
\end{remark}

\section{Experiments}
This section presents our key experimental findings on the temporal node classification task. Also, ablation studies of the key design choices are presented. Due to space limitation, the detailed experimental settings are deferred to appendix \ref{appendix:setting}. 

\begin{table}[ht]
    \centering
    \caption{Node classification performance (\%) on four small scale temporal graphs. The best and the second best results are highlighted as \Frst{red} and \Scnd{blue}, respectively.}
    \label{tab:small}
    \resizebox{\linewidth}{!}{
        \begin{tabular}{l|cccccccc}
            \toprule
            \textbf{}                        & \multicolumn{2}{c}{\textbf{DBLP-3}} & \multicolumn{2}{c}{\textbf{Brain}} & \multicolumn{2}{c}{\textbf{Reddit}} & \multicolumn{2}{c}{\textbf{DBLP-10}}                                                                                                         \\
            \cmidrule{2-9}
                                             & \textbf{Micro-F1}                   & \textbf{Macro-F1}                  & \textbf{Micro-F1}                   & \textbf{Macro-F1}                    & \textbf{Micro-F1}       & \textbf{Macro-F1}       & \textbf{Micro-F1}       & \textbf{Macro-F1}       \\
            \midrule
            DeepWalk~\cite{deepwalk}         & 47.53$_{\pm0.4}$                    & 47.21$_{\pm0.2}$                   & 51.45$_{\pm0.6}$                    & 51.03$_{\pm0.8}$                     & 26.82$_{\pm0.6}$        & 26.75$_{\pm0.4}$        & 66.38                   & 67.12                   \\
            Node2Vec~\cite{node2vec}         & 48.79$_{\pm0.3}$                    & 48.42$_{\pm0.4}$                   & 53.51$_{\pm0.5}$                    & 52.95$_{\pm0.6}$                     & 25.47$_{\pm0.6}$        & 25.44$_{\pm0.5}$        & 67.31                   & 66.93                   \\
            \midrule
            HTNE~\cite{htne}                 & 48.98$_{\pm0.2}$                    & 48.74$_{\pm0.3}$                   & 22.31$_{\pm0.8}$                    & 22.12$_{\pm0.5}$                     & 26.96$_{\pm0.5}$        & 26.80$_{\pm0.7}$        & 68.79                   & 68.36                   \\
            M$^2$DNE~\cite{mmdne}            & 49.12$_{\pm0.5}$                    & 48.87$_{\pm0.4}$                   & 23.79$_{\pm0.5}$                    & 23.54$_{\pm0.6}$                     & 25.79$_{\pm0.6}$        & 25.61$_{\pm0.4}$        & 69.71                   & 69.75                   \\
            DynamicTriad~\cite{DynamicTriad} & 48.78$_{\pm0.5}$                    & 48.63$_{\pm0.6}$                   & 21.71$_{\pm0.7}$                    & 21.94$_{\pm0.7}$                     & 28.76$_{\pm0.5}$        & 28.51$_{\pm0.5}$        & 66.95                   & 66.42                   \\
            \midrule
            MPNN~\cite{mpnn}                 & 81.78$_{\pm0.6}$                    & 81.46$_{\pm1.2}$                   & 90.97$_{\pm1.4}$                    & 91.01$_{\pm1.5}$                     & 40.85$_{\pm1.3}$        & 40.64$_{\pm1.2}$        & 67.74$_{\pm0.3}$        & 65.05$_{\pm0.5}$        \\
            STAR~\cite{star}                 & \Scnd{84.74$_{\pm1.0}$}             & \Scnd{84.20$_{\pm1.2}$}            & 92.08$_{\pm1.3}$                    & 92.23$_{\pm1.3}$                     & 43.42$_{\pm2.3}$        & 43.43$_{\pm2.4}$        & 72.98$_{\pm1.5}$        & 72.03$_{\pm1.2}$        \\
            tNodeEmbed~\cite{tNodeEmbed}     & 84.51$_{\pm1.2}$                    & 83.57$_{\pm1.1}$                   & \Scnd{92.35$_{\pm0.8}$}             & \Scnd{92.30$_{\pm1.0}$}              & 42.11$_{\pm1.8}$        & 42.06$_{\pm1.3}$        & 74.19$_{\pm1.8}$        & 74.23$_{\pm2.2}$        \\
            EvolveGCN~\cite{EvolveGCN}       & 84.01$_{\pm1.5}$                    & 83.12$_{\pm1.5}$                   & 92.20$_{\pm1.3}$                    & 92.00$_{\pm1.0}$                     & 41.24$_{\pm1.3}$        & 41.11$_{\pm1.5}$        & 71.32$_{\pm0.5}$        & 71.20$_{\pm0.7}$        \\
            SpikeNet~\cite{spikenet}         & 83.92$_{\pm1.5}$                    & 83.04$_{\pm1.1}$                   & 92.00$_{\pm1.2}$                    & 91.97$_{\pm1.2}$                     & 40.42$_{\pm2.0}$        & 40.20$_{\pm2.1}$        & 74.86$_{\pm0.5}$        & 74.65$_{\pm0.5}$        \\
            ROLAND~\cite{roland}             & 84.21$_{\pm1.4}$                    & 84.06$_{\pm1.5}$                   & 92.14$_{\pm1.2}$                    & 91.85$_{\pm1.1}$                    & \Scnd{44.22$_{\pm2.2}$} & \Scnd{44.25$_{\pm1.9}$} & \Scnd{75.01$_{\pm1.1}$} & \Scnd{74.98$_{\pm1.0}$} \\
            \midrule
            \ours                            & \Frst{85.26$_{\pm0.9}$}                    & \Frst{85.00$_{\pm1.3}$}             & \Frst{93.52$_{\pm1.0}$}             & \Frst{93.54$_{\pm0.9}$}              & \Frst{49.21$_{\pm0.5}$} & \Frst{49.05$_{\pm0.7}$} & \Frst{76.80$_{\pm0.3}$} & \Frst{76.00$_{\pm0.4}$} \\
            \bottomrule
        \end{tabular}
    }
\end{table}

\subsection{Experimental results}
\label{sec:result}

\paragraph{Node classification performance.}
The node classification performance of all methods is presented in table~\ref{tab:small}. It has been observed that graph embedding methods, especially static ones, tend to underperform in most cases. This is expected since these methods are typically trained in an unsupervised manner, solely focusing on exploiting the graph structure. We note that continuous-time methods HTNE and M$^2$DNE exhibit poor performance in DBLP-3, Brain, and Reddit even when compared to static methods. 
This indicates that continuous-time methods are not well-suited for handling discrete-time graphs, particularly in the absence of temporal continuity.
As can also be observed from table~\ref{tab:small}, most temporal graph neural networks demonstrate good performance on DBLP-3 and Brain datasets, where the node labels are largely dominated by node attribute information~\cite{star}. However, for datasets like Reddit and DBLP-10, where graph topology information plays a more significant role in classification, the performance has notably degraded. This indicates that the baseline methods struggle to effectively capture the underlying evolving graph structure and exploit it for accurate classification. In contrast, our most performant architecture, \ours-S$4$, exhibits an average improvement of 14\% and 2\% in Micro-F1 and Macro-F1 scores, respectively, compared to state-of-the-art baselines on the Reddit and DBLP-10 datasets. In addition, \ours-S$4$ is a more preferable choice for long graph sequences, achieving new state-of-the-art performance on the DBLP-10 dataset.

\begin{wraptable}{r}{0.5\linewidth}
    \centering
    \vspace{-6mm}
    \caption{Node classification performance (\%) on large scale temporal graphs. OOM: out-of-memory.}
    \vspace{2mm}
    \label{tab:large}
    \resizebox{\linewidth}{!}{

        \begin{tabular}{l|ccccc}

            \toprule
                                             & \multicolumn{2}{c}{\textbf{arXiv}} & \multicolumn{2}{c}{\textbf{Tmall}}                                                     \\
            \cmidrule{2-5}
                                             & \textbf{Micro-F1}                  & \textbf{Macro-F1}                  & \textbf{Micro-F1}       & \textbf{Macro-F1}       \\
            \midrule
            DeepWalk~\cite{deepwalk}         & 66.54$_{\pm0.3}$                   & 43.01$_{\pm0.3}$                   & 57.88                   & 49.53                   \\
            Node2Vec~\cite{node2vec}         & 67.71$_{\pm0.5}$                   & 43.69$_{\pm0.4}$                   & 60.66                   & 54.58                   \\
            \midrule
            HTNE~\cite{htne}                 & 65.48$_{\pm0.3}$                   & 42.25$_{\pm0.3}$                   & 62.64                   & 54.93                   \\
            M$^2$DNE~\cite{mmdne}            & 66.91$_{\pm0.5}$                   & 43.52$_{\pm0.6}$                   & 64.65                   & 58.47                   \\
            DynamicTriad~\cite{DynamicTriad} & 61.10$_{\pm0.2}$                   & 38.25$_{\pm0.3}$                   & 60.72                   & 51.16                   \\
            \midrule
            MPNN~\cite{mpnn}                 & 64.68$_{\pm1.7}$                   & 41.22$_{\pm1.5}$                   & 58.07$_{\pm0.6}$        & 50.27$_{\pm0.5}$        \\
            STAR~\cite{star}                             & OOM                                & OOM                                & OOM                     & OOM                     \\
            tNodeEmbed~\cite{tNodeEmbed}     & OOM                                & OOM                                & OOM                     & OOM                     \\
            EvolveGCN~\cite{EvolveGCN}       & 65.17$_{\pm1.4}$                   & 43.01$_{\pm1.3}$                   & 61.77$_{\pm0.6}$        & 55.78$_{\pm0.6}$        \\
            SpikeNet~\cite{spikenet}         & 66.69$_{\pm0.9}$                   & 43.96$_{\pm1.0}$                   & \Scnd{66.10$_{\pm0.3}$} & \Scnd{62.40$_{\pm0.6}$} \\
            ROLAND~\cite{roland}             & \Scnd{68.27$_{\pm1.2}$}            & \Scnd{48.01$_{\pm1.3}$}            & OOM & OOM       \\
            \midrule
            \ours                            & \Frst{70.67$_{\pm0.7}$}                   & \Frst{49.97$_{\pm0.5}$}            & \Frst{66.29$_{\pm0.1}$} & \Frst{62.57$_{\pm0.1}$} \\
            \bottomrule
        \end{tabular}
    }
\end{wraptable}
\paragraph{Scalability to large temporal graphs.}
To explore the effectiveness of \ours on large-scale and long-range temporal graphs, we conduct comparison experiments on arXiv and Tmall and present the result in table~\ref{tab:large}. Since both datasets exhibit a relatively high level of temporal continuity in the observed graph sequence, several advanced baselines have achieved good performance.
However, the graph scale and long sequence still pose significant challenges for learning over both datasets, where most methods are insufficient to effectively and efficiently capture the long-range graph dynamics.
In contrast, by leveraging the linear efficiency and long-range modeling capability of SSMs, \ours outperforms strong baselines on both datasets.

\begin{table}[ht]
    \centering
    \caption{Node classification performance (\%) with different SSM architectures.}
    \label{tab:ssm}
    \resizebox{\linewidth}{!}{
        \begin{tabular}{l|cccccccc}
            \toprule
            \textbf{} & \multicolumn{2}{c}{\textbf{DBLP-3}} & \multicolumn{2}{c}{\textbf{Brain}} & \multicolumn{2}{c}{\textbf{Reddit}} & \multicolumn{2}{c}{\textbf{DBLP-10}}                                                                                                         \\
            \cmidrule{2-9}
                      & \textbf{Micro-F1}                   & \textbf{Macro-F1}                  & \textbf{Micro-F1}                   & \textbf{Macro-F1}                    & \textbf{Micro-F1}       & \textbf{Macro-F1}       & \textbf{Micro-F1}       & \textbf{Macro-F1}       \\
            \midrule
            \ours-S$4$  & \Scnd{85.26$_{\pm0.9}$}                    & \Scnd{85.00$_{\pm1.3}$}                   & \Scnd{93.52$_{\pm1.0}$}             & \Scnd{93.54$_{\pm0.9}$}              & \Frst{49.21$_{\pm0.5}$} & \Frst{49.05$_{\pm0.7}$} & \Frst{76.80$_{\pm0.3}$} & \Frst{76.00$_{\pm0.4}$} \\
            \ours-S$5$  & {84.32$_{\pm1.5}$}             & {83.57$_{\pm1.9}$}            & 92.20$_{\pm0.6}$                    & 92.05$_{\pm0.7}$                     & \Scnd{44.75$_{\pm0.4}$} & \Scnd{44.79$_{\pm0.4}$} & 73.19$_{\pm0.6}$ & 72.95$_{\pm0.4}$ \\
            \ours-S$6$  & \Frst{85.74$_{\pm0.5}$}             & \Frst{85.23$_{\pm0.6}$}            & \Frst{93.80$_{\pm0.3}$}             & \Frst{94.47$_{\pm0.6}$}              & 42.52$_{\pm0.9}$        & 41.73$_{\pm1.1}$        & \Scnd{75.26$_{\pm0.3}$}        & \Scnd{74.81$_{\pm0.2}$}        \\
            \bottomrule
        \end{tabular}
    }
\end{table}
\paragraph{SSM architectures.}
As \ours is a general framework that generalizes SSMs to temporal graphs, we conduct experiments on extending \ours with different ad-hoc SSMs, including S$5$~\cite{s5} and S$6$~\cite{mamba}.
The node classification results on four datasets are shown in table~\ref{tab:ssm}.
By comparing different variants of \ours, we can find that S$4$ is the best architecture for learning over temporal graph sequences. S$5$, being a simplified version of S$4$ with fewer parameters, achieves poor performance on all datasets.
Notably, while S$6$ shows impressive performance in other modalities such as language or images~\cite{mamba,vim}, it is observed that they underperform when applied to graph sequences. This indicates that the selective mechanism may not be a good fit for graph data.
\begin{table}[ht]
    \centering
    \caption{Ablation results (\%) of \ours-S$4$ with different mixing configurations.}
    \label{tab:mixing}
    \resizebox{\linewidth}{!}{
        \begin{tabular}{l|cccccccc}
            \toprule
            \textbf{}  & \multicolumn{2}{c}{\textbf{DBLP-3}} & \multicolumn{2}{c}{\textbf{Brain}} & \multicolumn{2}{c}{\textbf{Reddit}} & \multicolumn{2}{c}{\textbf{DBLP-10}}                                                                                                             \\
            \cmidrule{2-9}
                       & \textbf{Micro-F1}                   & \textbf{Macro-F1}                  & \textbf{Micro-F1}                   & \textbf{Macro-F1}                    & \textbf{Micro-F1}        & \textbf{Macro-F1}        & \textbf{Micro-F1}        & \textbf{Macro-F1}        \\
            \midrule
            $\widehat{X}_1^\text{(O)} + \widehat{X}_2^\text{(O)}$ & 84.51$_{\pm 0.9}$                   & 84.28$_{\pm 0.9}$                  & 91.56$_{\pm 1.1}$                   & 91.99$_{\pm 0.7}$                    & 48.05$_{\pm 2.8}$        & 47.99$_{\pm 3.0}$        & 75.62$_{\pm 0.5}$        & 74.65$_{\pm 0.6}$        \\
            $\widehat{X}_1^\text{(F)} + \widehat{X}_2^\text{(O)}$   & \Scnd{85.12$_{\pm 0.5}$}            & \Scnd{84.82$_{\pm 0.3}$}           & \Scnd{92.36$_{\pm 0.8}$}            & 92.54$_{\pm 0.9}$                    & \Scnd{49.06$_{\pm 1.9}$} & \Scnd{49.06$_{\pm 1.8}$} & \Scnd{76.67$_{\pm 0.6}$} & \Scnd{75.95$_{\pm 0.7}$} \\
            $\widehat{X}_1^\text{(R)} + \widehat{X}_2^\text{(O)}$   & 84.98$_{\pm 1.1}$                   & 84.79$_{\pm 1.0}$                  & \Frst{93.52$_{\pm 1.0}$}            & \Frst{93.54$_{\pm 0.9}$}             & \Frst{49.21$_{\pm 0.5}$} & \Frst{49.05$_{\pm 0.7}$} & \Frst{77.76$_{\pm 0.5}$} & \Frst{77.54$_{\pm 0.6}$} \\
            $\widehat{X}_1^\text{(O)} + \widehat{X}_2^\text{(R)}$   & \Frst{85.26$_{\pm 0.9}$}            & \Frst{85.00$_{\pm 1.3}$}           & 91.84$_{\pm 1.9}$                   & 91.88$_{\pm 1.7}$                    & 47.88$_{\pm 1.8}$        & 47.94$_{\pm 1.8}$        & 75.41$_{\pm 0.7}$        & 74.89$_{\pm 1.0}$        \\
            \bottomrule
        \end{tabular}
    }
\end{table}

\paragraph{Mixing mechanism.}
We assess the effectiveness of various mixing mechanisms introduced in section \ref{sec: discretization} through a series of experiments conducted using the S$4$ variant of \ours. The analysis spans four distinct configurations: no intra-node mixing ($\widehat{X}_1^\text{(O)} + \widehat{X}_2^\text{(O)}$), \ref{eqn: feature_mixing} at the first layer ($\widehat{X}_1^\text{(F)} + \widehat{X}_2^\text{(O)}$), and \ref{eqn: representation_mixing} at either the first ($\widehat{X}_1^\text{(R)} + \widehat{X}_2^\text{(O)}$) or second ($\widehat{X}_1^\text{(O)} + \widehat{X}_2^\text{(R)}$) layers. The findings, presented in table~\ref{tab:mixing}, indicate that the integration of the \tokenmix module at the first layer generally leads to enhanced model performance. An intuitive explanation for this observed phenomenon is elaborated in appendix \ref{sec: choice}.

\begin{wrapfigure}{r}{0.3\linewidth}
    \centering
    \vspace{-6mm}
    \includegraphics[width=\linewidth]{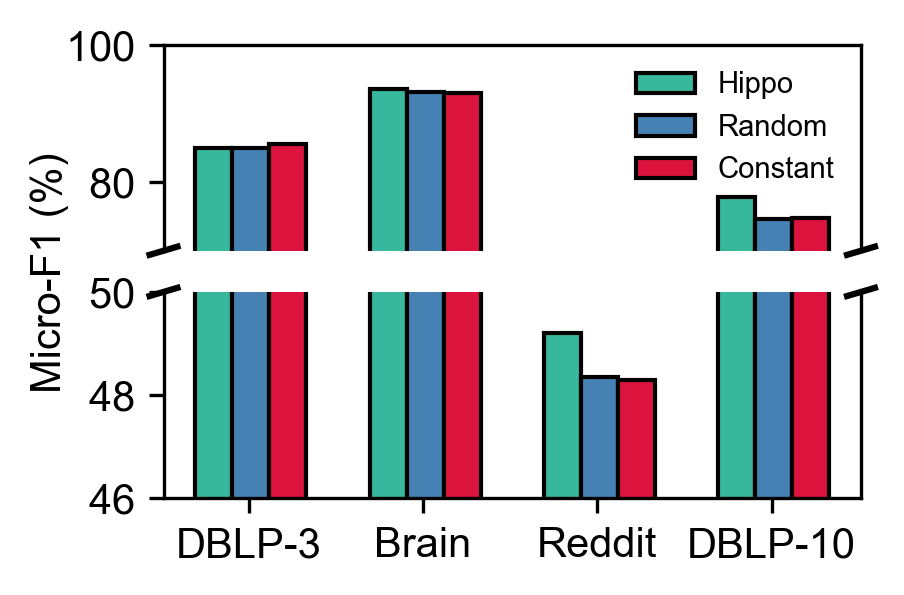}
    \vspace{-5mm}
    \caption{Comparison of \ours with different initialization strategies.}
    \vspace{-8mm}    
    \label{fig:initialization}
\end{wrapfigure}
\paragraph{Initialization strategy.}
Recent advancements have highlighted the crucial role of initialization in SSMs \cite{gu2022parameterization}, prompting our investigation into the effects of various initialization strategies for the $A$ matrix. Specifically, we explore "hippo", "constant", and "random" initializations, with their comprehensive definitions provided in appendix \ref{sec: algo_detail}. The result, as shown in figure ~\ref{fig:initialization} exhibits distinct performance variations across different initialization strategies, with \hippo being typically the dominant one which corroborates our theoretical motivations.

\section{Conclusion}
In this work, we introduce a conceptualized \ghippo abstraction on temporal graphs.
Building upon \ghippo, we propose \ours, a theoretically motivated state space framework for modeling temporal graphs derived from a novel memory compression scheme. 
The proposed framework is computationally efficient and versatile in its design, which is further corroborated by strong empirical performance across various benchmark datasets. 
We also point out the unobserved graph mutation issue in temporal graphs and propose different mixing mechanisms to ensure temporal continuity across consecutive graph snapshots.
Despite the promising results, the applicability of \ours is presently confined to discrete-time temporal graphs. A discussion of our framework's current limitations and the scope for future extensions is presented in appendix~\ref{sec: discussions}.

\section*{Acknowledgement}
The research is supported by the National Key R\&D Program of China under grant No. 2022YFF0902500, the Guangdong Basic and Applied Basic Research Foundation, China (No. 2023A1515011050), Shenzhen Science and Technology Program (KJZD20231023094501003), and Tencent AI Lab RBFR2024004. Liang Chen is the corresponding author.

\bibliographystyle{abbrv}
\bibliography{main}

\newpage
\appendix
\addcontentsline{toc}{section}{Appendix} 
\part{Appendix} 
\parttoc
\section{Broader impact}
\label{appendix:broader_impact}
Our extension of state space models for temporal graph modeling may have broader impacts, particularly if applied to social, traffic, and financial networks which could affect individuals and society. While our work is fundamental and not tied to specific applications, the potential for misuse in surveillance, exacerbation of biases in algorithmic decision-making, or violation of privacy cannot be dismissed. For example, more accurate temporal graph models might inadvertently facilitate more intrusive tracking of individuals or groups, or could be employed in creating discriminatory financial models. It is the responsibility of those employing such technologies to consider these ethical implications and to implement measures such as algorithmic fairness checks, privacy-preserving methodologies, and security protocols that prevent exploitation of the technology. As with any powerful tool, the utmost caution should be exercised to avoid the irresponsible use of our advancements in modeling dynamic systems.
\section{Notes}
\subsection{Laplacian regularization, diffusion and GNN approximation}\label{sec: lap}
In this section, we discuss in detail the smoothness regularization of different types of Laplacians, and their approximations related to popular GNN architectures.\par
\paragraph{Inductive bias and compression capability of different Laplacians.} As mentioned in section \ref{sec: hippo_on_dyg}, two typical (normalized) graph Laplacians are
\begin{align}
    &L_\text{sym}(s) = I - D(s)^{-1/2}A(s)D(s)^{-1/2} \label{eqn: l_sym}\\
    &L_\text{rw}(s) = I - D(s)^{-1}A(s) \label{eqn: l_rw},
\end{align}
with corresponding penalties written as
\begin{align}
    &\int_0^t Z(s)^\top L_\text{sym}(s) Z(s) d\mu_t(s) = \int_0^t \sum_{(u, v) \in E(s)}\left(\frac{z_u(s)}{\sqrt{d_u(s)}} - \frac{z_v(s)}{\sqrt{d_v(s)}}\right)^2d\mu_t(s) \\
    &\int_0^t Z(s)^\top L_\text{rw}(s) Z(s) d\mu_t(s) = \int_0^t \sum_{(u, v) \in E(s)}\frac{1}{d_u}\left(z_u(s) - z_v(s)\right)^2d\mu_t(s).
\end{align}
The above display reveals the inductive bias of Laplacian regularizations as a promoting closeness in a weighted $\ell_2$ metric regarding adjacent nodes' memory compressions, with distinct choices of Laplacians utilizing different weighting schemes. In particular, let $\alpha \rightarrow \infty$ in objective \ref{eqn: graph_obj} then when the Laplacian is chosen as $L_\text{sym}$, the solution $Z^\text{sym}(s)$ must satisfy
\begin{align}
    \frac{z^\text{sym}_v(s)}{\sqrt{d_v(s)}} = \frac{z^\text{sym}_u(s)}{\sqrt{d_u(s)}}, \forall (u, v) \in E(s), 0 \le s \le t
\end{align}
It then follows that $Z^\text{sym}$ compresses all the historical \emph{degree profile}s over connected components of $G$. Analogously, when $L_\text{rw}$ is chosen, it follows that the solution $Z^\text{rw}(s)$ must satisfy
\begin{align}
    z_v(s) = z_u(s), \forall (u, v) \in E(s), 0 \le s \le t
\end{align}
which compresses the composition of connected components of $G$. \par
\paragraph{Diffusion and GNN approximation.} We consider approximations of the following diffused node features with respect to some type of Laplacian:
\begin{align}
    H = \{h_v\}_{v \in V} := (I + \alpha L)^{-1}X B^\top = \left(I + \sum_{k = 1}^\infty (-1)^k \alpha^k L^k\right) X B^\top
\end{align}
The right-hand side of the preceding display is equivalent to performing infinite rounds of message passing. If we drop most of the higher-order terms, we arrive at models similar to graph neural networks. In particular, we keep only the first order terms, i.e., $k=1$, then for the two Laplacians listed above, for each $v \in V$, we have the resulting approximations:
\begin{align}
    &h_v^\text{sym} \approx (1 - \alpha) B x_v + \sum_{u \in N(v)} \dfrac{\alpha}{\sqrt{d_u d_v}} B f_u \tag{GCN-Like} \\
    &h_v^\text{rw} \approx (1 - \alpha) B x_v + \sum_{u \in N(v)} \dfrac{\alpha}{d_u} B f_u. \tag{SAGE(MEAN)-Like}
\end{align}
The above display exhibits a similar pattern to the design of graph neural networks with a aggregate-then-combine procedure, with the corresponding aggregation steps mirroring two typical GNN architectures GCN \cite{gcn} and SAGE with mean pooling \cite{hamilton2017inductive}. Furthermore, note that the effect of the balancing constant $\alpha$ would be absorbed into the learnable parameters of the GNN.

\subsection{An extension to varying node sets}\label{sec: ext_varying}
The methodology described in section \ref{sec: hippo_on_dyg} applies to temporal graphs with a \emph{fixed} node set. To extend our approach to accommodate graphs featuring \emph{varying} node sets, we initially focus on the continuous-time context, subsequently delving into discussions on discretization strategies. Suppose on the time interval $\mathcal{T} = [0, T]$, the node set evolves as depicted in the following sequence:
\begin{align}\label{eqn: node_evolve}
    V(0) \longrightarrow V(t_1) \longrightarrow V(t_2) \longrightarrow \cdots \longrightarrow  V(t_{R-1}) \longrightarrow  V(t_R) = V(T).
\end{align}
That is, throughout the interval $\mathcal{T}$, the node set undergoes alterations on $R$ distinct occasions, with associated changes occurring at times $t_1, \ldots, t_R$, respectively. We denote these evolving node sets as $V_0, \ldots, V_R$. To systematically analyze this temporal evolution, we partition the entire interval $\mathcal{T}$ into $R + 1$ segments:
\begin{align}
    \mathcal{T}_{r} = [t_{r}, t_{r+1}), 0 \le r \le R ~\text{with $t_0 = 0$ and $t_{R+1} = T$.}
\end{align}
According to the formulations in section \ref{sec: hippo_on_dyg}, on each $\mathcal{T}_r$, we have a well defined \ghippo operator and the solutions are characterized by theorem \ref{thm: ghippo}. With an approximation order of $N$, we let the resulting projection coefficients be
\begin{align}
    U_r(t) \in \mathbb{R}^{|V_r| \times N}, 0 \le r \le R, t \in \mathcal{T}_r.
\end{align}
To address the issue of shape incoherence arising from variations in node sets, we employ a \emph{memory alignment procedure}. This technique facilitates the mapping from $U_r(t_{r+1}-)$ to $U_{r+1}(t_{r+1})$, ensuring that the memory associated with each node is aligned according to the following scheme:
\begin{align}\label{eqn: alignment}
    u_{v, r+1}(t_{r+1}) = 
    \begin{cases}
        u_{v, r}(t_{r+1}-)&~\text{if $v \in V_r \cap V_{r+1}$} \\
        u_\text{init}     &~\text{if $v \in V_{r+1} \backslash V_r$}
    \end{cases}.
\end{align}
The memory alignment procedure \eqref{eqn: alignment} retains the continuity of states for nodes that persist over time. For nodes that emerge anew within the graph, it assigns a default initial state, which could either be an all-zero state or an estimation derived a priori from the states of neighboring nodes.
\paragraph{Discretizations.} Within the established context, Theorem \ref{thm: zoh} remains applicable on each segment $\mathcal{T}_r$. Consequently, our primary concern becomes the treatment of nodes that emerge between consecutive snapshots. Adhering to the ZOH discretization rule, newly emerged nodes lack historical states and therefore do not undergo the \tokenmix strategy, and use their initial state during their first appearance in the recurrent update. This initial state can be set to zero or determined through aggregation from neighboring nodes.

\subsection{Heuristic justifications for layer-specific choice of mixing mechanisms}\label{sec: choice}
The various mixing mechanisms introduced in this paper are designed to facilitate an estimation of a weighted average of unobserved graph representations that occur amidst successive observational time points. Starting with the output generated by the initial SSM block, these outputs inherently encapsulate the information pertaining to the current snapshot, as well as that of its antecedent. Thus, the incorporation of mixing mechanisms at a second-layer may inadvertently result in the assimilation of superfluous information, extending beyond the target scope of back-to-back snapshots. Therefore, confining the deployment of mixing solely to the first SSM layer ensures the strict conservation of temporal locality. We have empirically verified that such an approach yields enhanced performances.

\section{Proof of theorems}\label{sec: proofs}
In this section we present the proof of theorem \ref{thm: ghippo} and theorem \ref{thm: zoh}. We first present some necessary technical preparations: For any $t \in [0, T]$, let $\mu_t$ be some finite measure and let $\mathcal{H}_{\mu_t}$ denote the Hilbert space induced by the inner product
\begin{align}
    \left\langle f, g \right\rangle_{\mu_t} := \int_0^t f(s) g(s) d\mu_t(s).
\end{align}
Let $\mathcal{P}_N(t)$ be the space of polynomials constructed via the restriction of each element in $\mathcal{P}_N$ to $[0, t]$. We assume the measure family be chosen such that $\mathcal{P}_N(t) \subset \mathcal{H}_{\mu_t}, \forall t \in [0, T]$. For each $v \in V$, we assume that the restriction of $x_v$ (viewing as a function on $[0, T]$) to $[0, t]$ is an element of $\mathcal{H}_{\mu_t}$. Note that these assumptions are trivially satisfied for the scaled Legendre measure (LegS) $\mu_t = \frac{1}{t}\mathbb{I}_{[0, t]}$.

\subsection{Proof of theorem \ref{thm: ghippo}}

\begin{proof}[Proof of theorem \ref{thm: ghippo}]
    Hereafter we omit the dependence on $\mu_t$ and write the inner product simply as $\langle \cdot, \cdot \rangle$ without misunderstandings. Let $P_0, \ldots, P_{N-1}$ be a set of orthogonal polynomials in $\mathcal{P}_N$ with $\left\langle P_i, P_j\right\rangle = 0$ for $i \ne j$ and the degree of $P_n$ is $n$ for each $0 \le n \le N-1$. Then for any $f \in \mathcal{H}_{\mu_t}$, the optimal approximation in $L_2(\mu_t)$ distance in $\mathcal{P}_N$ is given by
    \begin{align}
        \Pi(f) = \sum_{n=0}^{N-1} \left\langle f, P_n\right\rangle \dfrac{P_n}{\left\|P_n\right\|_{\mu_t}^2},
    \end{align}
    where we define $\Pi$ to be the projection operator. Now we turn to $\mathcal{L}_t(Z; G, X, \mu)$, viewing $x_v$ as a function on $[0, t]$ for any $v$, we have:
    \begin{align}
        \mathcal{L}_t(Z; G, X, \mu) &= \int_0^t \sum_{v \in V}(x_v(s) - z_v(s))^2 d\mu_t(s) + \alpha \int_0^t Z(s)^\top L(s) Z(s) d\mu_t(s) \\
        &= \int_0^t \sum_{v \in V}(x_v(s) - \Pi(x_v)(s))^2 d\mu_t(s) \\
        &+ \int_0^t\sum_{v \in V}(\Pi(x_v)(s) - z_v(s))^2 d\mu_t(s) + \alpha \int_0^t Z(s)^\top L(s) Z(s) d\mu_t(s) \\
        &:= \int_0^t \sum_{v \in V}(x_v(s) - \Pi(x_v)(s))^2 d\mu_t(s) + \underline{\mathcal{L}}_t(Z; G, X, \mu)
    \end{align}
    The preceding display suggest that the minimizer of $\mathcal{L}_t(Z; G, X, \mu)$ is the same as the minimizer of $\underline{\mathcal{L}}_t(Z; G, X, \mu)$. It thus suffices to analyze $\underline{\mathcal{L}}_t(Z; G, X, \mu)$ which is easier to work with since $\Pi(x_v) \in \mathcal{P}_N, \forall v \in V$ and the solution is a direct application of Laplacian regularization with respect to the integrand at any $s \in [0, t]$, yielding:
    \begin{align}
        \gproj_t \left(G, X\right)(s) = \left(1 + \alpha L(s)\right)^{-1} \Pi(X)(s),
    \end{align}
    Now let the coefficient matrix $Q \in \mathbb{R}^{N_V \times N}$ be defined as $Q_{v, n} = \langle x_v, P_n\rangle, \forall v \in V, n \in [N]$, we obtain the \ghippo operator as:
    \begin{align}
        \ghippo\left(G, X\right)(s) := U(s) = \left(1 + \alpha L(s)\right)^{-1} Q(s)
    \end{align}
    Next we take derivatives to the coefficients. Note that $L(t)$ is discontinuous and we can only apply derivative on intervals where $L(t)$ remains same. First note that if we choose $\mu_t$ to be the scaled Legendre measure (LegS) with $\mu_t = \frac{1}{t}\mathbb{I}_{[0, t]}$, and $P_n$ as basic Legengre polynomials \cite[Appendix B.1.1]{hippo}, then we have the \hippo property:
    \begin{align}
        \dfrac{dQ(t)}{dt} = Q(t) A^\top + X(t) B^\top
    \end{align}
    where $A \in \mathbb{R}^{N \times N}, B \in \mathbb{R}^{N \times 1}$ with
    \begin{align}
        A_{nk} = -
        \begin{cases}
            \sqrt{(2n + 1)(2k + 1)}&\text{if $n > k$}, \\
            n + 1                  &\text{if $n = k$}, \\
            0                      &\text{if $n < k$}, 
        \end{cases},
        \qquad B_n = \sqrt{2n + 1}
    \end{align}
    Fix some $1 \le m \le M$ and for $t \in [t_{m-1}, t_m)$ we have:
    \begin{align}
        \dfrac{dU(t)}{dt} &= \left(\left(1 + \alpha L(t)\right)^{-1}\right) \dfrac{dQ(t)}{dt} \\
        &= \left(1 + \alpha L(t)\right)^{-1} \left(Q(t) A^\top + X(t) B^\top\right) \\
        &= U(t) A^\top + \left(1 + \alpha L(t)\right)^{-1}X(t) B^\top.
    \end{align}
    which finishes the proof.
\end{proof}

\subsection{Proof of theorem \ref{thm: zoh}}

\begin{proof}[Proof of theorem \ref{thm: zoh}]
    For ease of presentation, we operate on the node level instead of graph level. Recall the unobserved dynamics:
    \begin{align}
        G_{l-1} = G_{l, 0} \overset{\mathcal{E}_{l, 1}}{\longrightarrow} G_{l, 1} \overset{\mathcal{E}_{l, 2}}{\longrightarrow} G_{l, 2} \longrightarrow \cdots \longrightarrow  G_{l, M_l-1} \overset{\mathcal{E}_{l, M_l}}{\longrightarrow}  G_{l, M_l} = G_{l}
    \end{align}
    Following the assumptions, we can intuitively write the update process as follows:
    \begin{align}\label{eqn: unobserved_zoh}
        U_{l-1} = U_{l, 0} \overset{G_{l, 1}, X_{l, 1}}{\longrightarrow} U_{l, 1} \overset{G_{l, 2}, X_{l, 2}}{\longrightarrow} U_{l, 2} \longrightarrow \cdots \longrightarrow  U_{l, M_l-1} \overset{G_{l, M_l}, X_{l, M_l}}{\longrightarrow}  U_{l, M_l} = U_{l}
    \end{align}
    For each $0 \le i \le M_l$, let $D_i := (I + \alpha L_{l, i})^{-1}X_{l, i}B^\top$. Let $d_{v, i}$ be the $v$-th row of $D_i$ and $u_{v, i}$ be the $v$-th row of $U_{l, i}$. We first write the ZOH update corresponding to each step in \eqref{eqn: unobserved_zoh} for every $v \in V$:
    \begin{align}
        u_{v, i} =
        \begin{cases}
            e^{(t_i - t_{i-1}A)}u_{v, i-1} + A^{-1}\left(e^{(t_i - t_{i-1}A)} - I\right) d_{v, i},  &\text{for}~ 1 \le i \le M_l \\
            u_{v, l}            &\text{for}~ i = 0
        \end{cases}
    \end{align}
    Next we do the recursion from the rightmost to the leftmost according to \eqref{eqn: complete_zoh}:
    \begin{align}\label{eqn: in_between_dynamics}
        \begin{aligned}
            u_{v, l} &= e^{(\tau_{l}- t_{M_l}) A} u_{v, M_l} + A^{-1}\left(e^{(\tau_{l}- t_{M_l}) A} - I\right) d_{v, M_l} \\
            &= e^{(\tau_{l}- t_{M_l}) A}\left(e^{(t_{M_l}- t_{M_l - 1}) A} u_{v, M_l-1} + A^{-1}\left(e^{(t_{M_l}- t_{M_l - 1}) A} - I\right) u_{v, M_l-1}\right)\\
            &\qquad +A^{-1}\left(e^{(\tau_{l}- t_{M_l}) A} - I\right) u_{v, M_l} \\
            &\cdots \\
            &= e^{(\tau_{l}- \tau_{l-1}) A} u_{v, l-1} + \Upsilon
        \end{aligned}
    \end{align}
    where we define
    \begin{align}
        \Upsilon = A^{-1}\left(e^{(\tau_{l}- t_{M_l}) A} - I\right) u_{v, M_l} + \sum_{i=1}^{M_l}e^{(\tau_{l}- t_{i}) A}A^{-1}\left(e^{(t_{i} - t_{i-1}) A} - I\right) u_{v, i-1}
    \end{align}
    in the above display we define $t_0 = \tau_{l-1}$. Note that $A^{-1}$ and $e^{A \beta}$ are simultaneouly diagonalizable for any $\beta$, therefore the matrix multiplication commutes and we further write
    \begin{align}
        \Upsilon = A^{-1}\left(e^{(\tau_{l}- t_{M_l}) A} - I\right) u_{v, M_l} + \sum_{i=1}^{M_l}A^{-1}\left(e^{(\tau_{l} - t_{i-1}) A} - e^{(\tau_{l} - t_{i}) A}\right) u_{v, i-1}
    \end{align}
    With some abuse of notation now we let $A \in \mathbb{R}^N$ denote the diagonal vector of the matrix. We provide the following construction:
    \begin{align}
        \lambda_i = 
        \begin{cases}
            \dfrac{e^{(\tau_{l}- t_{M_l}) A} - I}{e^{(\tau_{l}- \tau_{l-1}) A} - I} &i=M_l \\
            \dfrac{e^{(\tau_{l}- t_{i-1}) A} - e^{(\tau_{l}- t_{i}) A}}{e^{(\tau_{l}- \tau_{l-1}) A} - I} &0 \le i \le M_l - 1
        \end{cases}.
    \end{align}
    Here note that $\lambda_i \in \mathbb{R}^N$. It is straightforward to verify that:
    \begin{align}\label{eqn: upsilon}
        \Upsilon = A^{-1} \left(e^{(\tau_{l}- \tau_{l-1}) A} - I\right) \sum_{i=0}^{M_l} \lambda_i \odot u_{v, i}
    \end{align}
    where $\{\lambda_i\}_{0\le i \le M_l}$ are non-negative $N$-dimensional vectors satisfying $\sum_{i=0}^{M_l} \lambda_i = \mathbf{1}_N$, with $\mathbf{1}_N$ denoting the all-one vector of dimension $N$. As the values of $\lambda$s are \emph{independent} of $v$, the proof finishes by combining \eqref{eqn: in_between_dynamics}, \eqref{eqn: upsilon} and write the above conclusion in matrix form via setting $\Lambda_i = \textsf{diag}(\lambda_i), 0 \le i \le M_l$
\end{proof}
\section{Algorithm descriptions}
\subsection{The designs of mixing mechanism \tokenmix}\label{sec: mixing_detail}
We consider two types of mixing mechanisms: convolution (\textsc{Conv1D}) and Scaled interpolation (\textsc{Interp}) which we describe below:
\paragraph{\textsc{Conv1D}.} This is the usual convolution operation along the sequence dimension using \emph{shared parameters}. We use a kernel size of $2$ so that only consecutive representations are mixed.
\paragraph{\textsc{Interp}.} This is an input-dependent weighted average strategy followed by an input-dependent scaling, implemented as
\begin{align}
    \tokenmix\left(Z_1, Z_2\right) = \rho(Z_1, Z_2) \odot \left(\xi(Z_1, Z_2) \odot Z_1 + (1 - \xi(Z_1, Z_2)) \odot Z_2\right),
\end{align}
where $Z_1, Z_2 \in \mathbb{R}^{N_V \times d}$ are node representation matrices corresponding to consecutive snapshots. $\rho$ and $\xi$ are scale and weight functions that map two inputs into positive real numbers of identical shape with $Z_1$ or $Z_2$, defined by
\begin{align}
    \rho(Z_1, Z_2) = \textsf{softplus}\left(W_\rho [Z_1 \| Z_2] + b_\rho\right), \quad \xi(Z_1, Z_2) = \textsf{sigmoid}\left(W_\xi [Z_1 \| Z_2] + b_\xi\right)
\end{align}
where $W_\rho, W_\xi \in \mathbb{R}^{2d \times d}$ and $b_\rho, b_\xi \in \mathbb{R}^d$ are learnable parameters therein.

\subsection{Details of \ours}\label{sec: algo_detail}
In this section, we elucidate on the methodology of \ours through three specific instantiations. For clarity in our explanation, we employ certain notational conventions that might be somewhat different from the main text: the term $V$ refers to the number of vertices in each graph snapshot $G_l$ within a sequence of $L$ graph snapshots $\{G_l\}_{1 \le l \le L}$ which we further denote as $G_{1:L}$, and $D$ represents the dimensionality of node features. The symbol \textsc{Linear} is used to represent a linear projection layer including a bias term, where the dimensions for input and output are typically clear from the context to ensure compatibility. The notation $X_{1:L}$ denotes the concatenation of $L$ tensors of the same dimensions along their second axis. For operations on tensors of order higher than two, we use the \einsum notation, as defined by the \texttt{einops} framework \cite{rogozhnikov2022einops}. We present the algorithmic description of our design of SSM layers, namely \ours-S$4$ (resp. \ours-S$5$, \ours-S$6$) in algorithm \ref{algo: graphssm_s4} (resp. algorithm \ref{algo: graphssm_s5}, algorithm \ref{algo: graphssm_s6}).
\footnote{In these algorithmic descriptions, we illustrate using the \ref{eqn: representation_mixing} mechanism. The case for \ref{eqn: feature_mixing} is similarly defined.}
\begin{algorithm}[h]
    \caption{\ours-S$4$ layer}
    \label{algo: graphssm_s4}
    \begin{algorithmic}[0]
        \Require A sequence of graph (snapshots) $G_{1:L}$ with each of size $V$. \\
        Node (hidden) feature inputs $X_{1:L} \in \mathbb{R}^{V \times L \times D}$. \\
        A graph neural network $\gnn_\theta$ parameterized by $\theta$. \\
        A mixing mechanism $\tokenmix_\phi$ parameterized by $\phi$. \\
        State-space parameters $A \in \mathbb{R}^{D \times N}, B \in \mathbb{R}^{D \times N}, C \in \mathbb{R}^{D \times N}$. \\
        A linear layer for adaptive time gaps $\textsc{Linear}_\tau$.
        \Ensure $Y_{1:L} \in \mathbb{R}^{V \times L \times D}$
    \end{algorithmic}
    \begin{algorithmic}[1]
        \State {\PyComment{Approximate diffusion via GNN}}
        \For{$t = 1$ to $L$}
        \State $Z_l = \gnn_\theta(X_l, G_l)$;
        \State $H_l = Z_l$ if $l = 1$ else $\tokenmix(Z_l, Z_{l-1})$;
        \EndFor
        \State Initialize state $U_0 = 0$; {\PyComment{SISO state of shape $V \times D \times N$}}
        \For{$t = 1$ to $L$}
        \State $\Delta_l = \textsf{softplus}\left(\textsc{Linear}_\tau (H_l)\right)$;
        \State $\overline{A} = \exp\left(\einsum(\Delta_l, A, \text{"$V, D N \rightarrow V D N$"})\right)$;
        \State $\overline{B} = \einsum(\Delta_l, B, \text{"$V, D N \rightarrow V D N$"})$;
        \State $U_l = U_{l-1} \odot \overline{A} + \einsum(\overline{B}, H_l, \text{"$V D N, V D \rightarrow V D N$"})$;
        \State $Y_l = \einsum(U_l, C, \text{"$V D N, D N \rightarrow V D$"})$;
        \EndFor;\\
        \Return $Y_{1:L}$;
    \end{algorithmic}
\end{algorithm}
Subsequently, we adopt the following neural architecture composed of $K$ blocks, with each block composed of one SSM layer followed by nonlinear activation and a residual connection:
\begin{align}\label{eqn: graphssm_e2e}
    H^{(k)}_{1:L} = \sigma\left(\textsc{SSMLayer}\left(H^{(k-1)}_{1:L}, G_{1:L}\right)\right) + \textsc{Linear}\left(H^{(k-1)}_{1:L}\right), 1 \le k \le K,
\end{align}
where $H_{1:L}^{(0)}$ are the node features $X_{1:L}$. The \textsc{SSMLayer} in \eqref{eqn: graphssm_e2e} may be chosen as any of \{\ours-S$4$ , \ours-S$5$, \ours-S$6$\}. In our implementation of \ours-S$6$, we add an additional layer normalization as the last operation of each block.
\begin{algorithm}[h]
    \caption{\ours-S$5$ layer}
    \label{algo: graphssm_s5}
    \begin{algorithmic}[0]
        \Require A sequence of graph (snapshots) $G_{1:L}$ with each of size $V$. \\
        Node (hidden) feature inputs $X_{1:L} \in \mathbb{R}^{V \times L \times D}$. \\
        A graph neural network $\gnn_\theta$ parameterized by $\theta$. \\
        A mixing mechanism $\tokenmix_\phi$ parameterized by $\phi$. \\
        State-space parameters $A \in \mathbb{R}^{N \times 1}, B \in \mathbb{R}^{D \times N}, C \in \mathbb{R}^{N \times D}$. \\
        A linear layer for adaptive time gaps $\textsc{Linear}_\tau$.
        \Ensure $Y_{1:L} \in \mathbb{R}^{V \times L \times D}$
    \end{algorithmic}
    \begin{algorithmic}[1]
        \State {\PyComment{Approximate diffusion via GNN}}
        \For{$t = 1$ to $L$}
        \State $Z_l = \gnn_\theta(X_l, G_l)$;
        \State $H_l = Z_l$ if $l = 1$ else $\tokenmix(Z_l, Z_{l-1})$;
        \EndFor
        \State Initialize state $U_0 = 0$; {\PyComment{MIMO state of shape $V \times N$}}
        \For{$t = 1$ to $L$}
        \State $\Delta_l = \textsf{softplus}\left(\textsc{Linear}_\tau (H_l)\right)$;
        \State $\overline{A} = \exp\left(\Delta_l A^\top \right)$;
        \State $\overline{B} = \einsum(\Delta_l, B, \text{"$V, D N \rightarrow V D N$"})$;
        \State $U_l = U_{l-1} \odot \overline{A} + \einsum(\overline{B}, H_l, \text{"$V D N, V D \rightarrow V N$"})$;
        \State $Y_l = U_l C$;
        \EndFor;\\
        \Return $Y_{1:L}$;
    \end{algorithmic}
\end{algorithm}
\paragraph{Initialization strategy.} Recent developments in state space modeling have underscored the significance of initializing the state matrices $A$, $B$, and $C$, with the initialization of $A$ frequently emerging as the most critical factor for the performance of the SSM \cite{gu2022parameterization}. Building upon the progress made in S$4$ \cite{s4} and S$4$D \cite{s4nd, gupta2022diagonal}, we evaluate three disparate initialization strategies for the matrix $A$. Note that since $A$ is diagonal, we instead represent $A$ as a $N$-dimensional vector:
\begin{align}
    \forall 1 \le n \le N:\quad A^{\text{S$4$D-Real}}_n = -(n + 1),\quad A^\text{S$4$D-Const}_n \equiv \frac{1}{2},\quad A^\text{random}_n = - e^{\chi}
\end{align}
\textbf{S$4$D-Real (\hippo)} This is the diagonal part of the original \hippo matrices \eqref{eqn: hippo_matrices}.\par
\textbf{S$4$D-Const (Constant)} This is the real part of the eigenvalues corresponding to the S$4$N matrix as defined in \cite{s4}, which equals $-\frac{1}{2}$.\par
\textbf{Random} This initialization is generated via a negative transform of a random number $\chi$, which we generated using the Glorot initialization method.\par
Additionally, we initialize the $B$ matrices using a constant of all-$1$ vector, and we initialize $C$ randomly using Glorot.

\begin{algorithm}[h]
    \caption{\ours-S$6$ layer}
    \label{algo: graphssm_s6}
    \begin{algorithmic}[0]
        \Require A sequence of graph (snapshots) $G_{1:L}$ with each of size $V$. \\
        Node (hidden) feature inputs $X_{1:L} \in \mathbb{R}^{V \times L \times D}$. \\
        A graph neural network $\gnn_\theta$ parameterized by $\theta$. \\
        Three graph neural networks for selective state spaces $\gnn_{\theta_B}, \gnn_{\theta_C}, \gnn_{\Delta}$. \\
        A mixing mechanism $\tokenmix_\phi$ parameterized by $\phi$. \\
        State-space parameters $A \in \mathbb{R}^{D \times N}$. \\
        \Ensure $Y_{1:L} \in \mathbb{R}^{V \times L \times D}$
    \end{algorithmic}
    \begin{algorithmic}[1]
        \State {\PyComment{Approximate diffusion via GNN}}
        \For{$t = 1$ to $L$}
        \State $Z_l = \gnn_\theta(X_l, G_l)$;
        \State $H_l = Z_l$ if $l = 1$ else $\tokenmix(Z_l, Z_{l-1})$;
        \EndFor
        \State Initialize state $U_0 = 0$; {\PyComment{SISO state of shape $V \times D \times N$}}
        \For{$t = 1$ to $L$}
        \State $\Delta_l = \textsf{softplus}\left(\gnn_\Delta(X_l, G_l) + b\right)$;
        \State $\overline{A} = \exp\left(\einsum(\Delta_l, A, \text{"$V D, D N \rightarrow V D N$"})\right)$;
        \State $\overline{B} = \einsum(\Delta_l, \gnn_{\theta_B}(X_l, G_l), \text{"$V D, V N \rightarrow V D N$"})$;
        \State $U_l = U_{l-1} \odot \overline{A} + \einsum(\overline{B}, H_l, \text{"$V D N, V D \rightarrow V D N$"})$;
        \State $\overline{C} = \gnn_{\theta_C}(X_l, G_l)$;
        \State $Y_l = \einsum(U_l, \overline{C}, \text{"$V D N, D N \rightarrow V D$"})$;
        \EndFor;\\
        \Return $Y_{1:L}$;
    \end{algorithmic}
\end{algorithm}

\subsection{Complexity and implementations} 
As detailed in section \ref{sec: hippo_on_dyg} and the algorithmic outlines provided, the implementations of \ours across all three variants can be stratified into two primary phases: a diffuse-and-mixing step, and a linear recurrence step. The diffuse-and-mixing stage facilitates straightforward parallelization through the employment of methods such as graph batching. The inherent linear characteristic of the recurrence operation permits the utilization of efficient computation strategies, notably the selective scan technique as introduced in \cite{mamba}. This approach yields a FLOP complexity of $O(VLDN)$ per SSM layer with work-efficient parallelization, concurrently achieving IO efficiency. Furthermore, note that if we replace the adaptive time gap mechanism into a constant, i.e., we use $\Delta_l \equiv \frac{1}{L}, 1 \le l \le L$ in line $8$ of algorithm \ref{algo: graphssm_s4} and algorithm \ref{algo: graphssm_s5}, the resulting linear system is time-invariant and we can use other computational accelerations like convolution \cite{s4, fu2023flashfftconv} and parallel scan \cite{s5}. 

\section{Discussions and limitations}\label{sec: discussions}
In this section, we discuss the limitations of the \ours framework and propose a few future research directions that might be of interest.
\subsection{Extension to continuous-time temporal graphs}

In this study, we focus on modeling discrete-time temporal graphs (DTTGs) through the lens of discretizing continuously evolving systems. The continuous-time viewpoint holds promise for encapsulating the modeling of continuous-time temporal graphs (CTTGs), a domain of growing importance in graph learning literature. However, the current GHiPPO framework has its limitations when extending to continuous-time setups. We provide a brief discussion as follows:
\paragraph{DTDG, CTDG and GHiPPO} Recall that in our formulation of the underlying graph process \eqref{eqn: graph_evolve}, the node features evolve continuously and the topological relations among nodes allow finite (countable) mutations. In DTTG representations, we do not directly observe the events, but we observe the entire graph at certain time spots resulting in a serious of snapshots. In this spirit, DTTGs have complete \emph{latitudinal} information, but are lossy regarding \emph{longitudinal} information. In CTTG representations, we have complete observations of events, but upon each event information, we do not observe the features of the rest of the nodes (that do not participate in those specific events). Therefore, CTTGs have complete longitudinal information, but are lossy regarding latitudinal information. In this regard, we may view DTDG and CTDG as two different lossy observation schemes of the underlying graph process in the GHiPPO abstraction.
\paragraph{Handing CTTGs using SSM discretizations is challenging} In section \ref{sec: discretization} of our paper (especially theorem \ref{thm: zoh}), we established the discretization scheme upon an ideal, discrete observation (We observe the graph snapshot at each mutation events). We believe that this result might reasonably hints the gap between possible empirical approximations in either DTTG or CTTG scenarios: In DTTGs, we believe approximations using available snapshots are possible since from hindsight, the ideal representation is a convex combination of the snapshot representations at the mutation times. The approximation bias mostly comes from fewer snapshots, and we use mixing strategies to mitigate the biases. However, in CTTG scenarios, we miss the majority of information in each snapshot. Besides, consturcting snapshots from CTDGs is itself a very impractical method. Hence, we regard the modeling of CTDG to be beyond the scope of GraphSSM.

\subsection{Going beyond piecewise dynamics}
The distinguishing algorithmic feature of \ghippo compared to the conventional \hippo framework lies in the piecewise nature of the dynamical system it generates. This characteristic leads to the challenge of dealing with unobserved dynamics, a factor that motivated the development of our \tokenmix module. However, it's important to acknowledge that the mixing module serves as an approximation of the actual underlying dynamics, thus representing a limitation within the framework. This acknowledgment raises an intriguing question: might there exist alternative problem formulations capable of yielding a smoother dynamical system that mitigates the issue of discontinuities? One potential pathway could involve adopting smoother versions of the Laplacian or revising the approximation objective specified in \eqref{eqn: graph_obj} towards one that fosters a smooth solution. Such a solution would promote consistency in the dynamics across the complete temporal interval. Implementing these innovations would, however, necessitate the incorporation of more sophisticated technical assumptions and theoretical tools which we left for future explorations.

\section{Experimental setup}
\label{appendix:setting}
\begin{table}[h]
\centering
\caption{Dataset Statistics.}
\label{tab:dataset}
\small
\begin{tabular}{l|cccccc}
\toprule
\textbf{} & \textbf{DBLP-3}  & \textbf{Brain} & \textbf{Reddit} & \textbf{DBLP-10} & \textbf{arXiv} & \textbf{Tmall} \\
\midrule
\textbf{\#Nodes} & 4,257  & 5,000 & 8,291 & 28,085 & 169,343 & 577,314 \\
\textbf{\#Edges} & 23,540 & 1,955,488 & 264,050 & 236,894 & 2,315,598 & 4,807,545 \\
\textbf{\#Features} & 100 & 20 & 20 & 128 & 128 & 128 \\
\textbf{\#Classes} & 3 & 10 & 4 & 10 & 40 & 5 \\
\textbf{\#Time Steps} & 10 & 12 & 10 & 27 & 35 & 186 \\
\textbf{Category} & Citation & Biology & Society & Citation & Citation & E-commerce \\
\textbf{$\text{TC}_\text{structure}$} & 0.139& 0.024& 0.030&  0.823& 0.580& 0.811\\
\textbf{$\text{TC}_\text{feature}$} &0.468 &0.070 & 0.556 &  0.823& 1.000 & 0.712 \\
\bottomrule
\end{tabular}
\end{table}

\paragraph{Temporal continuity.}
As illustrated in figure~\ref{fig: hidden_dynamics}, our work has highlighted the problem of unobserved graph mutations in learning from discrete-time temporal graphs. The issue of unobserved graph mutations greatly hampers the temporal continuity of such graphs, presenting a significant challenge for learning if not properly addressed.
To quantitatively measure the temporal continuity of a temporal graph, we calculate the average proximity between consecutive graph snapshots in the graph sequence. Specifically, we utilize Jaccard distance and Cosine similarity to measure the temporal continuity in terms of graph structure and node features, respectively:
\begin{equation}
\begin{aligned}
    &\text{TC}_\text{structure} = \frac{1}{L-1}\sum_l^{L-1} \frac{\mathcal{E}_l \cap \mathcal{E}_{l+1}}{\mathcal{E}_l \cup \mathcal{E}_{l+1}},\\
    &\text{TC}_\text{feature} = \frac{1}{L-1} \sum_l^{L-1} \operatorname{Sim}(X_{l}, X_{l+1}),\\
    & \text{where} \quad \operatorname{Sim}(X_{l}, X_{l+1})=\frac{1}{N_V}\sum_{v\in V} \frac{\langle x_{l,v} , x_{l+1,v}\rangle}{\left\|x_{l,v}\right\| \left\|x_{l+1,v}\right\|}.\\
\end{aligned}
\end{equation}

\paragraph{Datasets.}
We focus on the node classification task in discrete-time temporal graphs, which is a straightforward extension of static graphs.
The experiments are conducted on six temporal graph benchmarks with different scales and time snapshots, including DBLP-3~\cite{star}, Brain~\cite{star}, Reddit~\cite{star}, DBLP-10~\cite{spikenet}, arXiv~\cite{ogb}, and Tmall~\cite{spikenet}. Dataset statistics are summarized in table~\ref{tab:dataset} including the corresponding temporal continuity.
The graph datasets are collected from real-world networks belonging to different domains. It should be noted that in the arXiv dataset, the time information is associated with the nodes rather than the edges. As a result, we split the snapshots of arXiv based on the occurrence of nodes. Each snapshot graph in the dataset shares the same attribute information but not the topology. Therefore, $\text{TC}_\text{feature}=1.000$ for arXiv in our experiments.

\paragraph{Baselines.}
We compare \ours with the following baselines: (i) static graph embedding methods: DeepWalk~\cite{deepwalk}, Node2Vec~\cite{node2vec}; (ii) temporal graph embedding methods: HTNE, M$^2$DNE, and DynamicTriad~\cite{DynamicTriad}; (iii) discrete-time temporal graph neural networks: MPNN~\cite{mpnn}, STAR~\cite{star}, tNodeEmbed~\cite{tNodeEmbed}, EvolveGCN~\cite{EvolveGCN}, SpikeNet~\cite{spikenet}, and ROLAND~\cite{roland}. For baselines that are originally designed for static graphs, we accumulate historical information (edges) in the graph snapshot sequence and represent the static graph structure at the last time point. All baselines are carefully tuned to achieve their best results based on the code officially provided by the authors.

\paragraph{Implementation details.}
\ours is built on the success of SSMs, where in this work we have derived variants of \ours-S$4$, \ours-S$5$, and \ours-S$6$, under different SSM settings. Our experiments are mainly conducted on the S$4$ architecture. we employ feature mixing for DBLP-10 and representation mixing for other datasets. The graph convolution networks used to learn the graph structure are SAGE~\cite{hamilton2017inductive} for all datasets, except for arXiv, where GCN~\cite{gcn} is used.
We implement our models as well as baselines with PyTorch~\cite{pytorch} and PyTorch Geometric~\cite{pyg}, which are open-source software released under BSD-style \footnote{\url{https://github.com/pytorch/pytorch/blob/master/LICENSE}} and MIT \footnote{\url{https://github.com/pyg-team/pytorch_geometric/blob/master/LICENSE}} license, respectively. All datasets used throughout experiments are publicly available.
All experiments are conducted on an NVIDIA RTX 3090 Ti GPU with 24 GB memory.
Code will be made available at \url{https://github.com/EdisonLeeeee/GraphSSM}.

\paragraph{Evaluation protocol.}
We adopt the conventional \textit{transductive} learning setting, where the graph structure of all snapshots is visible during both training and inference stages. This is analogous to the standard node classification task, but with the additional incorporation of time information to facilitate the learning. 
For the DBLP-3, Brain, and Reddit datasets, we adopt the 81\%/9\%/10\% train/validation/test splits as suggested in \cite{star}. For the DBLP-10 and Tmall datasets, we follow the experimental settings of previous works \cite{spikenet}, where 80\% of the nodes are randomly selected as the training set, and the remaining nodes are used as the test set. Note that stratified sampling is employed to ensure that the class distribution remains consistent across splits. For the arXiv dataset, we use the fixed public splits.
We use Micro-F1 and Macro-F1 to evaluate the node classification performance. We report the average performance with standard deviation across 5 runs for each method.

\newpage
\section*{NeurIPS Paper Checklist}
\begin{enumerate}

    \item {\bf Claims}
    \item[] Question: Do the main claims made in the abstract and introduction accurately reflect the paper's contributions and scope?
    \item[] Answer: \answerYes{} 
    \item[] Justification: The main claims made in the abstract and introduction has accurately reflected the paper's contributions and scope.
    \item[] Guidelines:
        \begin{itemize}
            \item The answer NA means that the abstract and introduction do not include the claims made in the paper.
            \item The abstract and/or introduction should clearly state the claims made, including the contributions made in the paper and important assumptions and limitations. A No or NA answer to this question will not be perceived well by the reviewers.
            \item The claims made should match theoretical and experimental results, and reflect how much the results can be expected to generalize to other settings.
            \item It is fine to include aspirational goals as motivation as long as it is clear that these goals are not attained by the paper.
        \end{itemize}

    \item {\bf Limitations}
    \item[] Question: Does the paper discuss the limitations of the work performed by the authors?
    \item[] Answer: \answerYes{} 
    \item[] Justification: We have discussed the limitation of this work in appendix \ref{sec: discussions}.
    \item[] Guidelines:
        \begin{itemize}
            \item The answer NA means that the paper has no limitation while the answer No means that the paper has limitations, but those are not discussed in the paper.
            \item The authors are encouraged to create a separate "Limitations" section in their paper.
            \item The paper should point out any strong assumptions and how robust the results are to violations of these assumptions (e.g., independence assumptions, noiseless settings, model well-specification, asymptotic approximations only holding locally). The authors should reflect on how these assumptions might be violated in practice and what the implications would be.
            \item The authors should reflect on the scope of the claims made, e.g., if the approach was only tested on a few datasets or with a few runs. In general, empirical results often depend on implicit assumptions, which should be articulated.
            \item The authors should reflect on the factors that influence the performance of the approach. For example, a facial recognition algorithm may perform poorly when image resolution is low or images are taken in low lighting. Or a speech-to-text system might not be used reliably to provide closed captions for online lectures because it fails to handle technical jargon.
            \item The authors should discuss the computational efficiency of the proposed algorithms and how they scale with dataset size.
            \item If applicable, the authors should discuss possible limitations of their approach to address problems of privacy and fairness.
            \item While the authors might fear that complete honesty about limitations might be used by reviewers as grounds for rejection, a worse outcome might be that reviewers discover limitations that aren't acknowledged in the paper. The authors should use their best judgment and recognize that individual actions in favor of transparency play an important role in developing norms that preserve the integrity of the community. Reviewers will be specifically instructed to not penalize honesty concerning limitations.
        \end{itemize}

    \item {\bf Theory Assumptions and Proofs}
    \item[] Question: For each theoretical result, does the paper provide the full set of assumptions and a complete (and correct) proof?
    \item[] Answer: \answerYes{} 
    \item[] Justification: All the theorems, formulas, and proofs in the paper are numbered and cross-referenced. The proof of theorems are presented in appendix \ref{sec: proofs}.
    \item[] Guidelines:
        \begin{itemize}
            \item The answer NA means that the paper does not include theoretical results.
            \item All the theorems, formulas, and proofs in the paper should be numbered and cross-referenced.
            \item All assumptions should be clearly stated or referenced in the statement of any theorems.
            \item The proofs can either appear in the main paper or the supplemental material, but if they appear in the supplemental material, the authors are encouraged to provide a short proof sketch to provide intuition.
            \item Inversely, any informal proof provided in the core of the paper should be complemented by formal proofs provided in appendix or supplemental material.
            \item Theorems and Lemmas that the proof relies upon should be properly referenced.
        \end{itemize}

    \item {\bf Experimental Result Reproducibility}
    \item[] Question: Does the paper fully disclose all the information needed to reproduce the main experimental results of the paper to the extent that it affects the main claims and/or conclusions of the paper (regardless of whether the code and data are provided or not)?
    \item[] Answer: \answerYes{} 
    \item[] Justification: We provide a comprehensive description of the experimental settings in appendix \ref{appendix:setting}. The algorithm framework of \ours with different SSM architectures is presented in appendix \ref{sec: algo_detail}.
    All the code for reproducing the experiments is made available in the supplementary material accompanying the submission.
    \item[] Guidelines:
        \begin{itemize}
            \item The answer NA means that the paper does not include experiments.
            \item If the paper includes experiments, a No answer to this question will not be perceived well by the reviewers: Making the paper reproducible is important, regardless of whether the code and data are provided or not.
            \item If the contribution is a dataset and/or model, the authors should describe the steps taken to make their results reproducible or verifiable.
            \item Depending on the contribution, reproducibility can be accomplished in various ways. For example, if the contribution is a novel architecture, describing the architecture fully might suffice, or if the contribution is a specific model and empirical evaluation, it may be necessary to either make it possible for others to replicate the model with the same dataset, or provide access to the model. In general. releasing code and data is often one good way to accomplish this, but reproducibility can also be provided via detailed instructions for how to replicate the results, access to a hosted model (e.g., in the case of a large language model), releasing of a model checkpoint, or other means that are appropriate to the research performed.
            \item While NeurIPS does not require releasing code, the conference does require all submissions to provide some reasonable avenue for reproducibility, which may depend on the nature of the contribution. For example
                  \begin{enumerate}
                      \item If the contribution is primarily a new algorithm, the paper should make it clear how to reproduce that algorithm.
                      \item If the contribution is primarily a new model architecture, the paper should describe the architecture clearly and fully.
                      \item If the contribution is a new model (e.g., a large language model), then there should either be a way to access this model for reproducing the results or a way to reproduce the model (e.g., with an open-source dataset or instructions for how to construct the dataset).
                      \item We recognize that reproducibility may be tricky in some cases, in which case authors are welcome to describe the particular way they provide for reproducibility. In the case of closed-source models, it may be that access to the model is limited in some way (e.g., to registered users), but it should be possible for other researchers to have some path to reproducing or verifying the results.
                  \end{enumerate}
        \end{itemize}

    \item {\bf Open access to data and code}
    \item[] Question: Does the paper provide open access to the data and code, with sufficient instructions to faithfully reproduce the main experimental results, as described in supplemental material?
    \item[] Answer: \answerYes{} 
    \item[] Justification: All the data used in our experiments are publicly available online and the code to reproduce the experiments is available in the supplementary material accompanying the submission.
    \item[] Guidelines:
        \begin{itemize}
            \item The answer NA means that paper does not include experiments requiring code.
            \item Please see the NeurIPS code and data submission guidelines (\url{https://nips.cc/public/guides/CodeSubmissionPolicy}) for more details.
            \item While we encourage the release of code and data, we understand that this might not be possible, so “No” is an acceptable answer. Papers cannot be rejected simply for not including code, unless this is central to the contribution (e.g., for a new open-source benchmark).
            \item The instructions should contain the exact command and environment needed to run to reproduce the results. See the NeurIPS code and data submission guidelines (\url{https://nips.cc/public/guides/CodeSubmissionPolicy}) for more details.
            \item The authors should provide instructions on data access and preparation, including how to access the raw data, preprocessed data, intermediate data, and generated data, etc.
            \item The authors should provide scripts to reproduce all experimental results for the new proposed method and baselines. If only a subset of experiments are reproducible, they should state which ones are omitted from the script and why.
            \item At submission time, to preserve anonymity, the authors should release anonymized versions (if applicable).
            \item Providing as much information as possible in supplemental material (appended to the paper) is recommended, but including URLs to data and code is permitted.
        \end{itemize}

    \item {\bf Experimental Setting/Details}
    \item[] Question: Does the paper specify all the training and test details (e.g., data splits, hyperparameters, how they were chosen, type of optimizer, etc.) necessary to understand the results?
    \item[] Answer: \answerYes{} 
    \item[] Justification: We have provided a comprehensive description of the experimental settings in appendix \ref{appendix:setting}. Exploration experiments on different  SSM architectures and components of our proposed method are also conducted in section \ref{sec:result}.
    \item[] Guidelines:
        \begin{itemize}
            \item The answer NA means that the paper does not include experiments.
            \item The experimental setting should be presented in the core of the paper to a level of detail that is necessary to appreciate the results and make sense of them.
            \item The full details can be provided either with the code, in appendix, or as supplemental material.
        \end{itemize}

    \item {\bf Experiment Statistical Significance}
    \item[] Question: Does the paper report error bars suitably and correctly defined or other appropriate information about the statistical significance of the experiments?
    \item[] Answer: \answerYes{} 
    \item[] Justification: The experiments were conducted over 5 runs, and we present the averaged results along with the standard deviation.
    \item[] Guidelines:
        \begin{itemize}
            \item The answer NA means that the paper does not include experiments.
            \item The authors should answer "Yes" if the results are accompanied by error bars, confidence intervals, or statistical significance tests, at least for the experiments that support the main claims of the paper.
            \item The factors of variability that the error bars are capturing should be clearly stated (for example, train/test split, initialization, random drawing of some parameter, or overall run with given experimental conditions).
            \item The method for calculating the error bars should be explained (closed form formula, call to a library function, bootstrap, etc.)
            \item The assumptions made should be given (e.g., Normally distributed errors).
            \item It should be clear whether the error bar is the standard deviation or the standard error of the mean.
            \item It is OK to report 1-sigma error bars, but one should state it. The authors should preferably report a 2-sigma error bar than state that they have a 96\% CI, if the hypothesis of Normality of errors is not verified.
            \item For asymmetric distributions, the authors should be careful not to show in tables or figures symmetric error bars that would yield results that are out of range (e.g. negative error rates).
            \item If error bars are reported in tables or plots, The authors should explain in the text how they were calculated and reference the corresponding figures or tables in the text.
        \end{itemize}

    \item {\bf Experiments Compute Resources}
    \item[] Question: For each experiment, does the paper provide sufficient information on the computer resources (type of compute workers, memory, time of execution) needed to reproduce the experiments?
    \item[] Answer: \answerYes{} 
    \item[] Justification: Implementation details including software and hardware infrastructures are listed in appendix \ref{appendix:setting}.
    \item[] Guidelines:
        \begin{itemize}
            \item The answer NA means that the paper does not include experiments.
            \item The paper should indicate the type of compute workers CPU or GPU, internal cluster, or cloud provider, including relevant memory and storage.
            \item The paper should provide the amount of compute required for each of the individual experimental runs as well as estimate the total compute.
            \item The paper should disclose whether the full research project required more compute than the experiments reported in the paper (e.g., preliminary or failed experiments that didn't make it into the paper).
        \end{itemize}

    \item {\bf Code Of Ethics}
    \item[] Question: Does the research conducted in the paper conform, in every respect, with the NeurIPS Code of Ethics \url{https://neurips.cc/public/EthicsGuidelines}?
    \item[] Answer: \answerYes{} 
    \item[] Justification: The attached code has undergone thorough scrutiny to guarantee anonymity and adherence to the NeurIPS Code of Ethics.
    \item[] Guidelines:
        \begin{itemize}
            \item The answer NA means that the authors have not reviewed the NeurIPS Code of Ethics.
            \item If the authors answer No, they should explain the special circumstances that require a deviation from the Code of Ethics.
            \item The authors should make sure to preserve anonymity (e.g., if there is a special consideration due to laws or regulations in their jurisdiction).
        \end{itemize}

    \item {\bf Broader Impacts}
    \item[] Question: Does the paper discuss both potential positive societal impacts and negative societal impacts of the work performed?
    \item[] Answer: \answerYes{} 
    \item[] Justification: The discussion on both potential positive societal impacts and negative societal impacts of the work is provided in appendix \ref{appendix:broader_impact}.
    \item[] Guidelines:
        \begin{itemize}
            \item The answer NA means that there is no societal impact of the work performed.
            \item If the authors answer NA or No, they should explain why their work has no societal impact or why the paper does not address societal impact.
            \item Examples of negative societal impacts include potential malicious or unintended uses (e.g., disinformation, generating fake profiles, surveillance), fairness considerations (e.g., deployment of technologies that could make decisions that unfairly impact specific groups), privacy considerations, and security considerations.
            \item The conference expects that many papers will be foundational research and not tied to particular applications, let alone deployments. However, if there is a direct path to any negative applications, the authors should point it out. For example, it is legitimate to point out that an improvement in the quality of generative models could be used to generate deepfakes for disinformation. On the other hand, it is not needed to point out that a generic algorithm for optimizing neural networks could enable people to train models that generate Deepfakes faster.
            \item The authors should consider possible harms that could arise when the technology is being used as intended and functioning correctly, harms that could arise when the technology is being used as intended but gives incorrect results, and harms following from (intentional or unintentional) misuse of the technology.
            \item If there are negative societal impacts, the authors could also discuss possible mitigation strategies (e.g., gated release of models, providing defenses in addition to attacks, mechanisms for monitoring misuse, mechanisms to monitor how a system learns from feedback over time, improving the efficiency and accessibility of ML).
        \end{itemize}

    \item {\bf Safeguards}
    \item[] Question: Does the paper describe safeguards that have been put in place for responsible release of data or models that have a high risk for misuse (e.g., pretrained language models, image generators, or scraped datasets)?
    \item[] Answer: \answerNA{} 
    \item[] Justification: This paper poses no such risks.
    \item[] Guidelines:
        \begin{itemize}
            \item The answer NA means that the paper poses no such risks.
            \item Released models that have a high risk for misuse or dual-use should be released with necessary safeguards to allow for controlled use of the model, for example by requiring that users adhere to usage guidelines or restrictions to access the model or implementing safety filters.
            \item Datasets that have been scraped from the Internet could pose safety risks. The authors should describe how they avoided releasing unsafe images.
            \item We recognize that providing effective safeguards is challenging, and many papers do not require this, but we encourage authors to take this into account and make a best faith effort.
        \end{itemize}

    \item {\bf Licenses for existing assets}
    \item[] Question: Are the creators or original owners of assets (e.g., code, data, models), used in the paper, properly credited and are the license and terms of use explicitly mentioned and properly respected?
    \item[] Answer: \answerYes{} 
    \item[] Justification: We have cited the original paper that produced the code package or dataset and have explicitly stated the license used for the open-source frameworks.
    \item[] Guidelines:
        \begin{itemize}
            \item The answer NA means that the paper does not use existing assets.
            \item The authors should cite the original paper that produced the code package or dataset.
            \item The authors should state which version of the asset is used and, if possible, include a URL.
            \item The name of the license (e.g., CC-BY 4.0) should be included for each asset.
            \item For scraped data from a particular source (e.g., website), the copyright and terms of service of that source should be provided.
            \item If assets are released, the license, copyright information, and terms of use in the package should be provided. For popular datasets, \url{paperswithcode.com/datasets} has curated licenses for some datasets. Their licensing guide can help determine the license of a dataset.
            \item For existing datasets that are re-packaged, both the original license and the license of the derived asset (if it has changed) should be provided.
            \item If this information is not available online, the authors are encouraged to reach out to the asset's creators.
        \end{itemize}

    \item {\bf New Assets}
    \item[] Question: Are new assets introduced in the paper well documented and is the documentation provided alongside the assets?
    \item[] Answer: \answerNA{} 
    \item[] Justification: This paper does not release new assets.
    \item[] Guidelines:
        \begin{itemize}
            \item The answer NA means that the paper does not release new assets.
            \item Researchers should communicate the details of the dataset/code/model as part of their submissions via structured templates. This includes details about training, license, limitations, etc.
            \item The paper should discuss whether and how consent was obtained from people whose asset is used.
            \item At submission time, remember to anonymize your assets (if applicable). You can either create an anonymized URL or include an anonymized zip file.
        \end{itemize}

    \item {\bf Crowdsourcing and Research with Human Subjects}
    \item[] Question: For crowdsourcing experiments and research with human subjects, does the paper include the full text of instructions given to participants and screenshots, if applicable, as well as details about compensation (if any)?
    \item[] Answer: \answerNA{} 
    \item[] Justification: This paper does not involve crowdsourcing nor research with human subjects.
    \item[] Guidelines:
        \begin{itemize}
            \item The answer NA means that the paper does not involve crowdsourcing nor research with human subjects.
            \item Including this information in the supplemental material is fine, but if the main contribution of the paper involves human subjects, then as much detail as possible should be included in the main paper.
            \item According to the NeurIPS Code of Ethics, workers involved in data collection, curation, or other labor should be paid at least the minimum wage in the country of the data collector.
        \end{itemize}

    \item {\bf Institutional Review Board (IRB) Approvals or Equivalent for Research with Human Subjects}
    \item[] Question: Does the paper describe potential risks incurred by study participants, whether such risks were disclosed to the subjects, and whether Institutional Review Board (IRB) approvals (or an equivalent approval/review based on the requirements of your country or institution) were obtained?
    \item[] Answer: \answerNA{} 
    \item[] Justification: This paper does not involve crowdsourcing nor research with human subjects.
    \item[] Guidelines:
        \begin{itemize}
            \item The answer NA means that the paper does not involve crowdsourcing nor research with human subjects.
            \item Depending on the country in which research is conducted, IRB approval (or equivalent) may be required for any human subjects research. If you obtained IRB approval, you should clearly state this in the paper.
            \item We recognize that the procedures for this may vary significantly between institutions and locations, and we expect authors to adhere to the NeurIPS Code of Ethics and the guidelines for their institution.
            \item For initial submissions, do not include any information that would break anonymity (if applicable), such as the institution conducting the review.
        \end{itemize}
\end{enumerate}

\end{document}